\definecolor{DarkGreen}{rgb}{0.1,0.5,0.1}
\def\R{\mathbb{R}}
\def\err{\textnormal{err}}
\def\errh{\widehat{\textnormal{err}}_\delta}
\def\elld{\ell_{\delta}}
\newcommand{\indic}[1]{\mathds{1} \Big( #1 \Big)}
\def\alg{\mathcal{A}}
\def\E{\mathop{\mathbb{E}}}
\def\Snot{\Breve{S}}
\newif\ifnotes\notestrue
\definecolor{mygrey}{gray}{0.50}
\newcommand{\notename}[2]{{\textcolor{blue}{\footnotesize{\bf (#1:} {#2}{\bf ) }}}}
\newcommand{\notename}[2]{{}}
\newcommand{\pnote}[1]{{\notename{Pranjal}{#1}}}
\newtheorem{theorem}{Theorem}
\newtheorem{definition}{Definition}
\newtheorem{lemma}{Lemma}
\title{A Multiclass Boosting Framework for Achieving Fast and Provable Adversarial Robustness}
\author{%
  Jacob Abernethy\\
  Georgia Institute of Technology\\
  \texttt{profi@gatech.edu} \\
  \and
  Pranjal Awasthi \\
  Google Research \\
  \texttt{pranjalawasthi@google.com} \\
  \and
  Satyen Kale \\
  Google Research \\
  \texttt{satyenkale@google.com}
}
\begin{document}
\date{}

\maketitle

\begin{abstract}
Alongside the well-publicized accomplishments of deep neural networks there has emerged an apparent bug in their success on tasks such as object recognition: with deep models trained using vanilla methods, input images can be slightly corrupted in order to modify output predictions, even when these corruptions are practically invisible. This apparent lack of robustness has led researchers to propose methods that can help to prevent an adversary from having such capabilities. The state-of-the-art approaches have incorporated the robustness requirement into the loss function, and the training process involves taking stochastic gradient descent steps not using original inputs but on adversarially-corrupted ones. In this paper we propose a multiclass boosting framework to ensure adversarial robustness. Boosting algorithms are generally well-suited for adversarial scenarios, as they were classically designed to satisfy a minimax guarantee. We provide a theoretical foundation for this methodology and describe conditions under which robustness can be achieved given a weak training oracle. We show empirically that adversarially-robust multiclass boosting not only outperforms the state-of-the-art methods, it does so at a fraction of the training time.
\end{abstract}

\section{Introduction}
\label{sec:intro}

The phenomenon of adversarial robustness corresponds to a classifier's susceptibility to small and often imperceptible perturbations made to the input at {\em test time}. In the context of deep neural networks, such vulnerabilities were first reported in the work of \citet{biggio2013evasion, szegedy2013intriguing}. Since then it has been empirically demonstrated that across a wide range of settings, vanilla-trained neural networks 
are susceptible to test time perturbations \citep{ebrahimi2017hotflip,carlini2018audio}. This has led to a flurry of research on proposed defenses to adversarial perturbations \citep{madry2017towards, wong2018provable, raghunathan2018certified, sinha2018certifying} and corresponding attacks \citep{carlini2017towards, sharma2017breaking} that aim to break them. See the supplementary material for a more exhaustive discussion of existing relevant literature on adversarial robustness.

One of the most popular methods for making neural networks robust to adversarial attacks is the projected gradient descent~(PGD) based {\em adversarial training} procedure of \citet{madry2017towards}. This procedure comprises of an alternate minimization approach where at each epoch, a given batch of examples is replaced by its adversarial counterpart. This is obtained by approximately solving the problem of finding the worst perturbation~(within a specified radius) for each example in the batch. The parameters of the network are then updated via stochastic gradient descent, but using the adversarial batch of examples. It has also been empirically well established that the complexity of the classifier plays a crucial role in ensuring robustness. For many datasets, the architectures used for training robust classifiers tend to be more complex than the ones used for standard training\footnote{For instance, on the CIFAR-10 dataset adversarial training is typically performed using variants of the ResNet architecture.}. As a result of using more complex architectures and solving a difficult maximization problem for each example in the batch, adversarial training takes a long time to converge, and is hard to scale to large datasets. Recent works have started to address the problem of designing faster methods for adversarial training \citep{shafahi2019adversarial, wong2020fast}. In a similar vein, the goal of our work is to design theoretically sound methods for fast adversarial training using a multiclass boosting approach. 
Boosting \citep{freund1996game, schapire2003boosting} is a theoretically grounded paradigm for the design of machine learning algorithms, and has enjoyed tremendous empirical success over the years. Boosting belongs to a general class of ensemble methods, and works by combining several
base learners, each trained to achieve only modest performance individually. An appropriately weighted combination of such {\em weak learners} leads to a classifier of arbitrarily high accuracy \citep{freund1996game}. 

What is appealing about the boosting methodology, from the perspective of adversarial training, is that the necessary guarantee for the repeated selection of the base learner is indeed extremely weak: in the binary classification setting one need only find a hypothesis that has (reweighted) prediction accuracy better than $\frac 1 2 + \gamma$---this is only slightly better than random guessing. It can be dramatically easier to find such a weakly-accurate predictors, and combine them into a high-accuracy predictor. In the context of training neural networks robust to adversarial examples, this suggests we can use small network architectures, train them for a limited amount of time to ensure they are weakly adversarially robust, and eventually ensemble these networks together. While a given trained base learner may have vulnerabilities with respect to particular inputs, the boosting approach is to then reweight the dataset to more strongly emphasize hard examples. The next base learner will focus more heavily on the examples where our predictors were susceptible during previous boosting iterations. 


It is worth noting that previous work has suggested that blindly ensembling models does not lead to better robustness, and often adversarial examples translate across models trained independently, either via different methods or using different architectures \citep{he2017adversarial}. Nevertheless, we rigorously show that using the right formulation adversarial boosting is indeed possible. Investigating the theory of boosting in adversarial settings was also recently posed an open problem in the work of \citet{montasser2020reducing}.

\textbf{Theoretical foundations of multiclass boosting for achieving adversarial robustness.} We identify a natural notion of a {\em robust weak learner} that is appropriate in adversarial settings. Via a reduction to standard multiclass boosting, we then show that given access to a robust weak learning oracle, boosting to arbitrarily high robust accuracy can be achieved in adversarial settings. We then extend our theoretical framework to incorporate score-based predictors. This leads to a natural greedy algorithm based on the {\em gradient boosting} framework \citep{friedman2001greedy} that facilitates an efficient implementation.

\textbf{State-of-art results in a fraction of previous training time.} Our general boosting based algorithm can be combined with any existing methods for training adversarially robust classifiers. We use our algorithm for training classifiers robust to $\ell_\infty$ perturbations and compare it with the PGD based algorithm of \citet{madry2017towards}. Furthermore, we also apply our algorithm for training certifiably robust models under $\ell_2$ perturbation and compare it to the recent work of \citet{salman2019provably}, achieving state-of-the-art results in a fraction of the training time for for previous adversarially robust models, as shown in Figure~\ref{fig:train-time}.
\begin{figure*}[ht]
  \hspace*{\fill}%
  \includegraphics[width=3in]{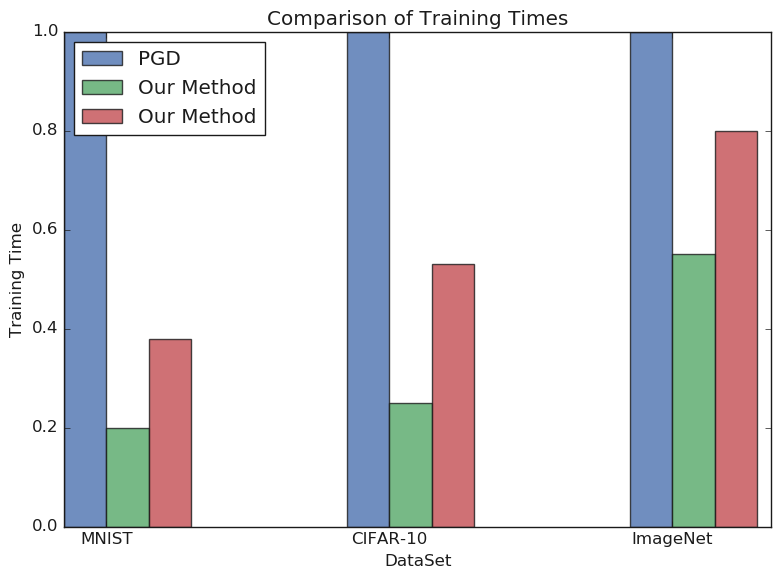}%
  \includegraphics[width=3in]{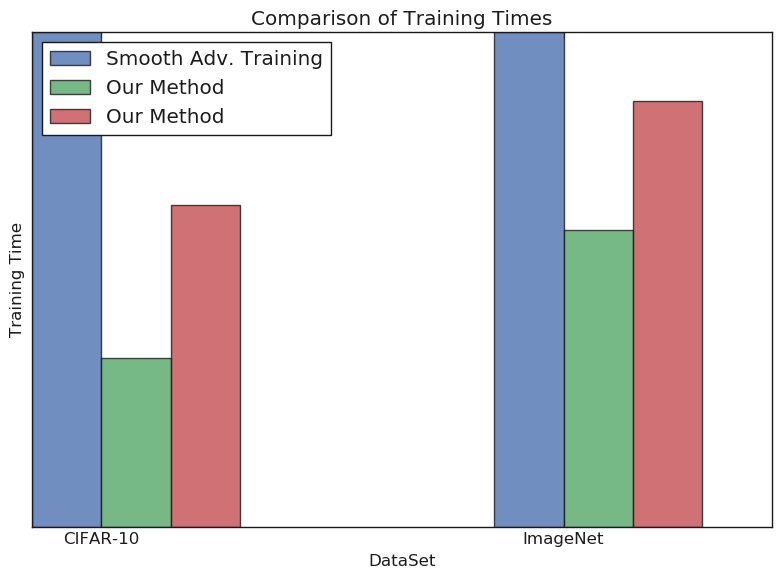}%
  \hspace*{\fill}%
  \caption{\label{fig:train-time}{\em Left:} Comparison of training times on MNIST, CIFAR-10 and ImageNet for $\ell_\infty$ robustness. The figure shows the training time for our boosting based approach for two different choice of the base weak learners, as a fraction of the time needed to train via the PGD based approach of \cite{madry2017towards}.
  {\em Right:} Comparison of training times on CIFAR-10 and ImageNet for certified $\ell_2$ robustness. The figure shows the training time for our boosting based approach for two different choice of the base weak learners, as a fraction of the time needed to train via the randomized smoothing approach of \cite{salman2019provably}}
\end{figure*}

\section{Related Work}
\label{sec:related}
There is a large body of work on algorithms for training adversarially robust models, certifying robustness and attacks for generating adversarial examples. Here we discuss existing literature that is most relevant to the current work. The current state of the art method that has resisted various adversarial attacks is the projected gradient descent~(PGD) based algorithm of \citep{madry2017towards}. It is well known that PGD based training takes significantly more time as compared to standard stochastic gradient descent based training. This is due to the fact that the training procedure approximately solves a difficult maximization problem, per example, per batch in each epoch. Recently there have been efforts to speed up PGD based training and scale the method to large datasets \citep{gowal2018effectiveness, shafahi2019adversarial, wong2020fast}. There have also been efforts to improve the robustness of models trained via PGD for $\ell_\infty$ perturbations, to other types of perturbations to the test input \citep{schott2018towards}. There have also been methods beyond the vanilla PGD based algorithm that have been proposed to train robust models \citep{gowal2018effectiveness, gowal2019alternative, salman2019provably, zhang2019theoretically, li2019certified}.

A series of recent works focus on designing attacks to evaluate and benchmark the robustness of machine learning models. These range from white-box attacks, black-box attacks to even physical attacks in the real world \citep{gilmer2018motivating}. The work of \citep{carlini2017towards} developed state of the art attacks for neural networks. The recent work of \citep{athalye2018obfuscated} showed that many defenses proposed for adversarial robustness suffer from the gradient masking phenomenon and can be broken by carefully designing the attack. Since evaluating models based on a particular attack is sensitive to the choice of the particular algorithm used for generating adversarial examples, there have also been recent works on provably certifying the robustness of models. These include using linear programming, semi-definite based relaxations \cite{wong2018provable, raghunathan2018certified} and Gaussian smoothing based methods for certifying $\ell_2$ robustness \cite{cohen2019certified}. In particular, the work of \citet{salman2019provably} uses the Gaussian smoothing based method of \citet{cohen2019certified, lecuyer2019certified} to propose optimizing a smooth adversarial loss for achieving certified $\ell_2$ robustness. In Section~\ref{sec:experiments} we use this smooth loss as a weak learner in our boosting based algorithm for achieving certified $\ell_2$ robustness. In the context of standard training, ensembling of neural networks has recently been shown to achieve superior performance \citep{loshchilov2016sgdr, smith2017cyclical, izmailov2018averaging, huang2017snapshot}. A crucial component in these works is the use of {\em cyclic learning rates}. Training proceeds by choosing a large learning rate and training the model for a few epochs over which the learning rate is reduced to zero. At the end of the epoch the checkpoint is saved and training is restarted with a large learning rate to train another network, and so on. This has the effect of forcing the network to explore different parts of the parameter space. As we discuss in Section~\ref{sec:experiments}, the use of cyclic learning rates will play a crucial role in our boosting based approach as well. In the context of adversarial robustness ensembling has been explore to a limited extent. The work of \citet{tramer2017ensemble} proposed performing training data augmentation by using adversarial examples generated on a different model. The recent work of \citet{andriushchenko2019provably} studies boosting for adversarial robustness in the context of decision stumps.

\section{Boosting framework}

We consider a multiclass classification task, where the input space is $\R^d$, and the output space has $k$ classes, and is denoted by $[k] := \{1, 2, \ldots, k\}$. Assume we have sampled a training set $S=\{(x_1, y_1), \dots, (x_m, y_m)\} \in (\R^d \times [k])^m$, from the data distribution. For a set $U$, we define $\Delta(U)$ to be the set of all probability distributions on $U$ for an appropriate $\sigma$-field that is evident from context. For any $p \geq 1$, define $B_p(\delta) := \{z:\R^d: \|z\|_p \leq \delta\}$.

A {\em multilabel} predictor is a function $h: \R^d \to \{0, 1\}^k$. The output $h(x)$ of a multilabel predictor is to be interpreted as the indicator vector of a set of potential labels for $x$, so we also use the notation $y \in h(x)$ to mean $h(x)_y = 1$. We call $h$ a {\em unilabel} predictor if the output $h(x)$ is a singleton for every input $x$. For convenience, for a unilabel predictor we use the notation $h(x) = y$ to mean $h(x) = \{y\}$. 

In practice, multiclass predictors are usually constructed via {\em score-based} predictors: this is a function $f: \R^d \to \R^k$, such that for any input $x$, the vector $f(x)$ is understood as assigning scores to each label, with higher scores indicating a greater degree of confidence in that label. A score based predictor $f$ can be converted into a {\em unilabel} predictor $f^\text{am}$ via the argmax operation, i.e. $f^\text{am}(x) = \arg\max_y f(x)_y$, with ties broken arbitrarily. 

\subsection{Multiclass Boosting}
\label{sec:multiclass-boosting}


Suppose $H$ is a class of multilabel predictors. To define appropriate conditions for boostability, we need to define a particular loss function. Let $\ell : H \times \R^d \times [k] \times [k]$ be
\[
    \textstyle \ell(h; x, y, y') = \indic{y' \in h(x)} - \indic{y \in h(x)}.
\]
This is an unusual loss function for a couple of reasons. First, it takes values in $\{-1, 0, 1\}$. Second, it depends on not one but two labels $y \ne y'$. Here we should think of $y$ as the ``correct label,'' and $y'$ as a ``candidate incorrect label,'' and the loss is measuring the extent to which $h$ predicted $y$ from $x$ \emph{relative to}  the wrong label $y'$. 

As it is core to the boosting framework, we will imagine that some distribution $Q \in \Delta(H)$ over the space of hypotheses is given to us, and we want to compute the expected error of the random $h$ drawn according to $Q$. In this case, we define $\ell(Q;x,y,y') = \E_{h \sim Q}[\ell(h; x,y,y')]$. The distribution $Q \in \Delta(H)$ also defines a {\em score-based} predictor $h_Q := \E_{h \sim Q}[h]$, and thereby, a {\em unilabel} predictor, which we call the \emph{weighted-plurality-vote classifier}, via the argmax operation: $h_Q^\text{am}$.
We have the following relationship\footnote{All proofs of results in this paper can be found in the appendix.} between the $\ell$ loss of $Q$ and the output of $h_Q^\text{am}$:
\begin{lemma}
\label{lem:loss-transfer}
    Let $(x, y) \in \R^d \times [k]$, $Q \in \Delta(H)$. Then $\forall y' \ne y: \ell(Q; x,y,y') < 0 \implies  h_Q^\text{am}(x) = y.$
\end{lemma}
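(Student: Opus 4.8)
The plan is to unwind the definitions and observe that $\ell(Q;x,y,y')$ is exactly the gap between the $y'$-th and $y$-th coordinates of the score vector $h_Q(x)$; the hypothesis of the lemma then says precisely that $y$ is the strict coordinate-wise maximizer of $h_Q(x)$, which is what the argmax classifier outputs.

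First I would compute $\ell(Q;x,y,y')$ directly from the definition. By linearity of expectation,
\[
\ell(Q;x,y,y') = \E_{h\sim Q}\big[\indic{y'\in h(x)}\big] - \E_{h\sim Q}\big[\indic{y\in h(x)}\big] = \Pr_{h\sim Q}[y'\in h(x)] - \Pr_{h\sim Q}[y\in h(x)].
\]
Next I would identify these probabilities with coordinates of $h_Q := \E_{h\sim Q}[h]$: for any multilabel $h$ we have $h(x)_y = \indic{y\in h(x)}$, so taking expectations over $h\sim Q$ gives $h_Q(x)_y = \Pr_{h\sim Q}[y\in h(x)]$, and similarly for $y'$. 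Combining, $\ell(Q;x,y,y') = h_Q(x)_{y'} - h_Q(x)_y$.

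Then I would substitute the hypothesis: the assumption that $\ell(Q;x,y,y') < 0$ for every $y'\ne y$ is equivalent to $h_Q(x)_{y'} < h_Q(x)_y$ for every $y'\ne y$. Hence $y$ is the \emph{unique} maximizer of $y''\mapsto h_Q(x)_{y''}$ over $[k]$, so no matter how ties are broken in the argmax, $h_Q^\text{am}(x) = \arg\max_{y''} h_Q(x)_{y''} = y$, which proves the claim.

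There is essentially no hard step here; the points requiring (minor) care are the direction of the inequality — the loss is ``incorrect-label indicator minus correct-label indicator,'' so negativity of $\ell(Q;\cdot)$ means the correct label dominates — and the fact that the hypothesis is a \emph{strict} inequality, which is exactly what makes the conclusion independent of the tie-breaking rule used to define $h_Q^\text{am}$.
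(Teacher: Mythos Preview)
Your proof is correct and is essentially identical to the paper's own argument: both unwind $\ell(Q;x,y,y')$ via linearity of expectation into $\Pr_{h\sim Q}[y'\in h(x)] - \Pr_{h\sim Q}[y\in h(x)] = h_Q(x)_{y'} - h_Q(x)_y$, conclude that the strict-negativity hypothesis makes $y$ the unique maximizer of $h_Q(x)$, and note that strictness handles tie-breaking in the argmax.
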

In other words, if, on an example $(x, y)$, the expected loss of an $h$ drawn according to $Q$ is negative with respect to every incorrect label $y'$, then the weighted-plurality-vote classifier predicts perfectly on this $(x,y)$ example. 

The following approach to multiclass boosting was developed in \citet{schapire1999improved} with a more general framework in \citet{mukherjee2013theory}. Given $S$, we start with the class of distributions $\Delta(\Snot)$, where $\Snot := \{(x_i, y_i'): i=1, \ldots, m, y_i' \in [k] \setminus \{ y_i \}\}$, the set of \emph{incorrect} example/label pairs. Define the error of $h$ with respect to $D \in \Delta(\Snot)$ as $\err(h, D) := \E_{(x_i, y_i') \sim D}[\ell(h; x_i, y_i, y_i')]$.
We can now define the concept of a weak learner:
\begin{definition}[$\gamma$-Weak Learning]
\label{def:weak-learner}
For some $\gamma > 0$, a $\gamma$-weak learning procedure for the training set $S$ is an algorithm $\alg$ which, when provided any $D \in \Delta(\Snot)$, returns a predictor in $\alg(D) \in H$, such that $\forall D \in \Delta(\Snot): \; \err(\alg(D),D) \leq - \gamma.$
\end{definition}
The following lemma, which follows by an easy min-max analysis using Lemma~\ref{lem:loss-transfer}, shows that the existence of a weak learner implies the existence of a distribution $Q \in \Delta(H)$ such that $h_Q^\text{am}$ is a perfect classifier on the training data:
\begin{lemma}
\label{lem:boosting}
Suppose that for some $\gamma > 0$ we have a $\gamma$-weak learning procedure $\alg$ for the training set $S$. Then there is a distribution $Q \in \Delta(H)$ such that for all $(x, y) \in S$, we have $h_Q^\text{am}(x) = y$.
\end{lemma}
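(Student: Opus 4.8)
The plan is to set up the appropriate two-player zero-sum game and run a minimax argument, exactly as the phrase ``an easy min-max analysis'' suggests. First I would reduce the desired conclusion to a statement about the $\ell$-loss of a mixture. By Lemma~\ref{lem:loss-transfer}, it suffices to produce a $Q \in \Delta(H)$ such that $\ell(Q; x_i, y_i, y_i') < 0$ for every incorrect pair $(x_i, y_i') \in \Snot$. Since $\Snot$ is finite and the map $D \mapsto \err(Q, D) = \E_{(x_i,y_i')\sim D}[\ell(Q; x_i,y_i,y_i')]$ is linear in $D$, its maximum over $\Delta(\Snot)$ is attained at a point mass on some $(x_i, y_i')$. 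Hence ``$\ell(Q; \cdot) < 0$ on all of $\Snot$'' is equivalent to ``$\max_{D \in \Delta(\Snot)} \err(Q, D) < 0$,'' so the goal becomes: find $Q$ with $\max_{D} \err(Q,D) < 0$.

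Next I would bring in the value of the game in which the booster picks $D \in \Delta(\Snot)$, the learner picks $Q \in \Delta(H)$, and the payoff is $\err(Q, D)$, which is bilinear (using $\err(Q,D) = \E_{h \sim Q}[\err(h,D)]$). A minimax theorem then gives
$$\min_{Q \in \Delta(H)} \max_{D \in \Delta(\Snot)} \err(Q,D) \;=\; \max_{D \in \Delta(\Snot)} \min_{Q \in \Delta(H)} \err(Q,D) \;=\; \max_{D \in \Delta(\Snot)} \min_{h \in H} \err(h,D),$$
where the last equality holds because $Q \mapsto \err(Q,D)$ is linear and so minimized at a vertex of $\Delta(H)$. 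Now the $\gamma$-weak learning hypothesis says precisely that for every $D$ the predictor $\alg(D) \in H$ satisfies $\err(\alg(D), D) \le -\gamma$, hence $\min_{h \in H} \err(h,D) \le -\gamma$ for all $D$, so the right-hand side above is at most $-\gamma$. Therefore $\min_Q \max_D \err(Q,D) \le -\gamma$, and any minimizer $Q^*$ satisfies $\max_D \err(Q^*, D) \le -\gamma < 0$. In particular $\ell(Q^*; x_i, y_i, y_i') \le -\gamma < 0$ for every $(x_i, y_i') \in \Snot$, and Lemma~\ref{lem:loss-transfer} yields $h_{Q^*}^\text{am}(x_i) = y_i$ for all $(x_i,y_i) \in S$.

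The one point needing care — and the main obstacle — is justifying the minimax swap when $H$, and hence $\Delta(H)$, may be infinite. I would address this in one of two ways. The clean existential route is Sion's minimax theorem: $\Delta(\Snot)$ is a compact convex subset of a finite-dimensional space, and the payoff is linear (hence continuous and quasi-concave) in $D$ and linear (hence quasi-convex) in $Q$, which suffices. The more informative route, which doubles as the boosting algorithm used later in the paper, is to establish the inequality $\min_Q \max_D \err(Q,D) \le -\gamma$ constructively: run a no-regret update (multiplicative weights / Hedge) over the finitely many points of $\Snot$ for $T$ rounds, and at round $t$ feed the current distribution $D_t$ to $\alg$ to obtain $h_t := \alg(D_t)$ with $\err(h_t, D_t) \le -\gamma$. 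The Hedge regret bound then shows that the uniform mixture $Q$ over $h_1, \dots, h_T$ satisfies $\max_D \err(Q, D) \le -\gamma + O(\sqrt{\log|\Snot|\,/\,T})$, which is strictly negative once $T = \Omega(\log(km)/\gamma^2)$. Either way the resulting $Q$ is the distribution claimed by the lemma; I would present the Sion version for brevity and remark that the online-learning version gives an explicit boosting procedure.
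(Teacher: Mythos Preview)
Your proposal is correct and follows essentially the same min-max argument as the paper's proof: bound $\max_{D}\min_{h}\err(h,D)\le -\gamma$ from the weak-learning assumption, swap via a minimax theorem, and invoke Lemma~\ref{lem:loss-transfer}. Your write-up is in fact more careful than the paper's---you justify the minimax swap (via Sion or the constructive Hedge route) whereas the paper simply cites ``the Minimax Theorem''---but the structure and ideas are identical.
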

While the lemma is non-constructive as stated, it is easy to make it constructive by using standard regret minimization algorithms \citep{freund1996game}.

\subsection{Adversarial Robustness in Multiclass Boosting}
\label{sec:adversarial-boosting}

The goal of this paper is to produce multiclass classification algorithms that are robust to adversarial perturbations of the input data. To that end, we must adjust our framework to account for the potential that our test data are corrupted. Indeed, we shall imagine that for a given {\em unilabel} hypothesis $h$, when the input $x$ is presented an adversary is allowed to select a small perturbation $z \in B_p(\delta)$, for some $p \geq 1$ and $\delta > 0$ which controls the size of the perturbation, with the goal of modifying the output in order that $h(x+z) \ne y$. 

Inspired by the multiclass boosting framework above, we define
\begin{align}\label{eq:robustloss}
    \elld(h; x, y, y') &= \indic{\exists z \in B_p(\delta): h(x+z) = y'} - \indic{\forall z \in B_p(\delta): h(x+z) = y}.
\end{align}
Let us pause to discuss why this definition is appropriate for the task of adversarial robustness in the multiclass setting. Our vanilla notion of loss measured the extent to which the hypothesis $h$ would output $y'$ instead of $y$ on a given input $x$. The robust loss \eqref{eq:robustloss} measures something more complex: is it \emph{possible} that $h$ could be fooled into predicting something other than $y$, and in particular whether a perturbation \emph{could} produce $y'$.

With this loss function, we can extend the theory of multiclass boosting to the adversarial setting as follows. In analogy with $\err(h, D)$, define the robust error of $h$ with respect to $D \in \Delta(\Snot)$ as 
\[
\err_\delta(h, D) := \E_{(x_i, y_i') \sim D}[\ell_\delta(h; x_i, y_i, y_i')].
\]
We can now define a robust weak learner in analogy with Definition~\ref{def:weak-learner}:
\begin{definition}[$(\gamma,\delta)$-Robust Weak Learning Procedure]
\label{def:robust-weak-learner}
For some $\gamma > 0$, a $(\gamma,\delta)$-robust weak learning procedure for the training set $S$ is an algorithm $\alg$ which, when provided inputs sampled from any $D \in \Delta(\Snot)$, returns a predictor in $\alg(D) \in H$, such that $\forall D \in \Delta(\Snot): \err_\delta(\alg(D),D) \leq - \gamma$.
\end{definition}
With this definition, we can now show the following robust boosting theorem:
\begin{theorem}
\label{thm:robust-boosting}
Suppose that for some fixed $\gamma > 0$ we have a $(\gamma, \delta)$-robust weak learning procedure $\alg$ for the training set $S$. Then there exists a distribution $Q \in \Delta(H)$ such that for all $(x, y) \in S$, and for all $z \in B_p(\delta)$, we have $h_Q^\text{am}(x + z) = y$.
\end{theorem}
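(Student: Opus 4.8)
The plan is to establish the robust analogue of Lemma~\ref{lem:boosting} in two moves. First I would extract a single distribution $Q^\star \in \Delta(H)$ whose robust loss satisfies $\ell_\delta(Q^\star; x_i, y_i, y_i') \le -\gamma$ on every incorrect example/label pair, by the same min--max argument that underlies Lemma~\ref{lem:boosting}. Then I would show that $\ell_\delta(\cdot\,; x, y, y')$ dominates the ordinary loss $\ell(\cdot\,; x+z, y, y')$ \emph{at every} perturbation $z \in B_p(\delta)$, so that the clean-data consequence drawn from Lemma~\ref{lem:loss-transfer} automatically upgrades into a guarantee at all perturbed inputs $x_i + z$.

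\textbf{Step 1: the min--max step.} I would set up the zero-sum game in which the booster plays $D \in \Delta(\Snot)$, the weak learner plays $Q \in \Delta(H)$, and the payoff is $\err_\delta(Q, D) := \E_{h \sim Q}[\err_\delta(h, D)]$, which is bilinear in $(D, Q)$; note $\Snot$ is finite, so $\Delta(\Snot)$ is a compact simplex. Definition~\ref{def:robust-weak-learner} supplies, for each $D$, a pure response $\alg(D)$ with $\err_\delta(\alg(D), D) \le -\gamma$, hence $\max_D \min_Q \err_\delta(Q, D) \le -\gamma$. Invoking the minimax theorem exactly as in the proof of Lemma~\ref{lem:boosting} gives a single $Q^\star$ with $\max_D \err_\delta(Q^\star, D) \le -\gamma$; testing against point masses then yields $\ell_\delta(Q^\star; x_i, y_i, y_i') \le -\gamma$ for every $i$ and every $y_i' \ne y_i$.

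\textbf{Step 2: the key monotonicity inequality (the main obstacle).} The heart of the argument is the claim that for any $Q \in \Delta(H)$, any $(x, y)$, any $y' \ne y$, and any $z \in B_p(\delta)$,
\[
\ell(Q; x+z, y, y') \;\le\; \ell_\delta(Q; x, y, y').
\]
This is the one place where the mixed $\exists/\forall$ structure of definition~\eqref{eq:robustloss} must be used, and I expect it to be the crux (though not computationally deep). It is enough to prove it pointwise for a fixed $h \in H$ and then average over $h \sim Q$. Because $z$ is itself an admissible perturbation, $\indic{y' \in h(x+z)} \le \indic{\exists w \in B_p(\delta) : h(x+w) = y'}$; and because ``$h(x+w) = y$ for all $w \in B_p(\delta)$'' forces $h(x+z) = y$, we get $\indic{\forall w \in B_p(\delta) : h(x+w) = y} \le \indic{y \in h(x+z)}$. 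Subtracting the second inequality from the first gives $\ell(h; x+z, y, y') \le \ell_\delta(h; x, y, y')$, and linearity of expectation finishes the claim.

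\textbf{Step 3: conclude via Lemma~\ref{lem:loss-transfer}.} Fix $(x_i, y_i) \in S$ and $z \in B_p(\delta)$. Combining Steps 1 and 2, for every $y' \ne y_i$ we have $\ell(Q^\star; x_i + z, y_i, y') \le \ell_\delta(Q^\star; x_i, y_i, y') \le -\gamma < 0$. Lemma~\ref{lem:loss-transfer}, applied to the point $(x_i + z, y_i)$ (which it permits, being stated for arbitrary inputs, not only training inputs) and the distribution $Q^\star$, then yields $h_{Q^\star}^\text{am}(x_i + z) = y_i$. Since $(x_i, y_i) \in S$ and $z \in B_p(\delta)$ are arbitrary, $Q = Q^\star$ witnesses the theorem. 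Beyond Step 2, the only things needing care are routine: that the minimax theorem applies to this possibly-infinite $H$ in the same manner as for Lemma~\ref{lem:boosting}, and that we are free to feed non-training points into Lemma~\ref{lem:loss-transfer}.
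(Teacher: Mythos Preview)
Your proposal is correct and follows essentially the same route as the paper. The paper packages your Step~1 as a black-box reduction to Lemma~\ref{lem:boosting} via an auxiliary multilabel class $\tilde H$ satisfying $\ell(\tilde h; x_i, y_i, y_i') = \ell_\delta(h; x_i, y_i, y_i')$, but unpacking that reduction gives precisely your direct minimax computation; and the paper's concluding step---passing from $\Pr_{h\sim Q}[\forall z: h(x+z)=y] > \Pr_{h\sim Q}[\exists z: h(x+z)=y']$ to $\Pr_{h\sim Q}[h(x+z)=y] > \Pr_{h\sim Q}[h(x+z)=y']$ for each fixed $z$---is exactly your Step~2 monotonicity inequality written in terms of probabilities rather than $\ell$.
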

The proof of this theorem follows via a reduction to the multiclass boosting technique of Section~\ref{sec:multiclass-boosting}. The reduction relies on the following construction: for any $h \in H$, we define $\tilde h$ to be the {\em multilabel} predictor, defined only on $\{x_1, x_2, \ldots, x_m\}$, as follows: for any $x_i$ and $y \in [k]$, 
\[
    \tilde h(x_i)_y = \begin{cases} \indic{\forall z \in B_p(\delta): h(x_i+z) = y_i} & \text{ if } y = y_i \\
                     \indic{\exists z \in B_p(\delta): h(x_i+z) = y} & \text{ if } y \neq y_i.
    \end{cases}
\]

Let $\tilde{H}$ be the class of all $\tilde h$ predictors constructed in this manner. The mapping $h \mapsto \tilde h$ converts a $(\gamma,\delta)$-robust weak learning algorithm for $H$ into a $\gamma$-weak learning algorithm for $\tilde{H}$, which implies, via Lemma~\ref{lem:boosting}, that there is a distribution $\tilde{Q} \in \Delta(\tilde{H})$ such that for all $(x, y) \in S$, we have $\tilde{h}_{\tilde{Q}}(x) = y$. Now our defintion of the class $\tilde{H}$ implies that if $Q \in \Delta(H)$ is the distribution obtained from $\tilde{Q}$ by applying the reverse map $\tilde h \mapsto h$, then $h_Q^\text{am}$ achieves perfect robust accuracy on $S$.

Just as for Lemma~\ref{lem:boosting}, Theorem~\ref{thm:robust-boosting} can also be made constructive via standard regret minimization algorithms. The result is a boosting algorithm that operates in the following manner. Over a series of rounds, the booster generates distributions in $\Delta(\Snot)$, and then the weak learning algorithm $\alg$ is invoked to find a good hypothesis for each distribution. These hypotheses are then combined (generally with non-uniform weights, as in AdaBoost) to produce the final predictor. Thoerem~\ref{thm:robust-boosting} parallels classical boosting analyses of algorithms like AdaBoost which show how boosting reduces training error. As for generalization, similar to classical analyses bounds can be obtained either by controlling the capacity of the boosted classifier by the number of boosting stages times the capacity of the base function class, or via a margin analysis. We do not include them since even for standard (non-adversarial) training of deep networks, existing generalization bounds are overly loose. In the section~\ref{sec:experiments} we will empirically demonstrate that our boosting based approach indeed leads to state-of-the-art generalization performance.

\subsection{Robust boosting via one-vs-all weak learning}
\label{sec:ova-boosting}

\newcommand\ellova{\ell^{\text{ova}}_\delta}
\newcommand\errova{\err^{\text{ova}}_\delta}

Unfortunately, in practical scenarios when the number of classes $k$ is very large it becomes difficult to implement a weak boosting procedure via the weak learning algorithm of Definition~\ref{def:robust-weak-learner}. This is primarily because the support of the distributions generated by the booster is $\Snot$, which is of size $m(k-1)$, and thus evaluating the robust error rate $\err_\delta(h, D)$ for any hypothesis trypically requires us to find a perturbation $z$ for every $(x_i, y')$ pair with $y' \neq y_i$ such that $h(x_i + z) = y'$. This search for $k-1$ possible perturbations on every example may be prohibitively expensive.

We now present a version of weak learning which is easier to check practically: the $k-1$ searches for a perturbation per example are 
reduced to 1 search per example. We start by loosening the definition of the robust loss \eqref{eq:robustloss} to the following one-vs-all loss $\ellova: H \times \R^d \times [k] \rightarrow -1, 1]$ as
\begin{align}
\ellova(h; x, y) &= \indic{\exists z \in B_p(\delta): h(x+z) \neq y} \notag  - \indic{\forall z \in B_p(\delta): h(x+z) = y}  \notag\\
&= 2\indic{\exists z \in B_p(\delta): h(x+z) \neq y} - 1. \label{eq:ova-loss}
\end{align}
It is clear from the definition that for any $h \in H$, $x \in \mathbb{R}^d$, and $y, y' \in [k]$, we have 
\begin{equation}
\label{eq:loss-domination}
  \ell_\delta(h; x, y, y') \leq \ellova(h; x, y).  
\end{equation}
Now, given a distribution $D \in \Delta(S)$, we define the robust one-vs-all error rate of $h$ on $D$ as $\errova(h, D) := \E_{(x, y) \sim D}[\ellova(h; x, y)]$.
We can now define a robust one-vs-all weak learner:
\begin{definition}[$(\gamma,\delta)$-Robust One-vs-All Weak Learning Procedure]
\label{def:robust-ova-weak-learner}
For some $\gamma > 0$, a $(\gamma,\delta)$-robust weak one-vs-all learning procedure for the training set $S$ is an algorithm $\alg$ which, when provided $D \in \Delta(S)$, returns a predictor in $\alg(D) \in H$, such that $\forall D \in \Delta(S): \; \errova(\alg(D),D) \leq - \gamma.$
\end{definition}
The inequality \eqref{eq:loss-domination} implies that the requirement on $\alg$ in Definition~\ref{def:robust-ova-weak-learner} is more stringent than the one in Definition~\ref{def:robust-weak-learner}, which implies that a robust one-vs-all weak learner is sufficient for boostability, via Theorem~\ref{thm:robust-boosting}. Notice that the support of the distribution $D$ in Definition~\ref{def:robust-ova-weak-learner} is $S$ rather than $\Snot$. Thus, constructing a robust one-vs-all weak learner amounts to just {\em one} search for a perturbation per example: we just need to find a perturbation that makes the hypothesis output {\em any} incorrect label.

\subsection{Robust boosting for score-based predictors}
\newcommand\dunif{D_\text{unif}}
\newcommand\ellsce{\ell^{\text{ce}}_\delta}
\newcommand\ellce{\ell^{\text{ce}}}

We now derive a robust boosting algorithm when the base class of predictors is {\em score-based} via analogy with the unilabel predictor case. If a class $H$ of unilabel predictors admits a $(\gamma, \delta)$-robust one-vs-all weak learner for a given training set $S$, then by Theorem~\ref{thm:robust-boosting}, there exists a distribution $Q^* \in \Delta(H)$ such that $h_{Q^*}^\text{am}$ has perfect robust accuracy on $S$. Equivalently, since $\ellova(h_{Q^*}^\text{am}; x, y) = -1$ for any $(x, y) \in S$, $Q^*$ is a solution to the following ERM problem\footnote{Indeed, {\em any} solution $Q'$ to the ERM problem~\eqref{eq:Q-erm} yields a classifier $h_{Q'}^\text{am}$ with perfect robust accuracy on $S$.}:
\begin{equation}
\label{eq:Q-erm}
  \inf_{Q \in \Delta(H)} \frac{1}{m}\sum_{i=1}^m\ellova(h_Q^\text{am}; x_i, y_i).
\end{equation}
However this ERM problem is intractable in general, so instead we can hope to find $Q^*$ via a surrogate loss for $\ellova$. For the purpose of our experiments, we choose
to use a robust version of the softmax cross-entropy surrogate loss defined as
\begin{equation}
\label{eq:ellsce}
    \ellsce(f; x, y) := \sup_{z \in B_p(\delta)} \ellce(f(x+z), y)  
\end{equation}
where $\ellce$ is the standard cross entropy loss. Lemma~\ref{lem:surrogate} in Appendix~\ref{app:proofs} shows that this is a valid surrogate loss.
Thus, using Lemma~\ref{lem:surrogate} for $f = h_Q$, we may attempt to solve \eqref{eq:Q-erm} by solving the potentially more tractable problem: $\inf_{Q \in \Delta(H)} \frac{1}{m}\sum_{i=1}^m \ellsce(h_Q; x_i, y_i)$.  

We can now extend the above 
to a class $F$ of score-based predictors as follows. First, since score-based predictors have outputs in $\R^k$ rather than $\{0, 1\}^k$, we combine them via linear combinations rather than convex combinations, as in the case of unilabel predictors. Thus, let $\text{span}(F)$ be the set of all possible finite linear combinations of functions in $F$. We index these functions by a weight vector $\beta$ supported on $F$ which has only finitely many non-zero entries, so that $f_\beta$ is the corresponding linear combination. Then, the goal of boosting is to solve the following ERM problem:
\begin{equation}
\label{eq:surrogate-beta-erm}
  \inf_{f_\beta \in \text{span}(F)} \frac{1}{m}\sum_{i=1}^m \ellsce(f_\beta; x_i, y_i).
\end{equation}
By Lemma~\ref{lem:surrogate}, any near-optimal $\beta$ for this problem yields a unilabel predictor $f^\text{am}_{\beta}$ which can be expected to have low robust error rate. Furthermore, note that for any $x \in \R^d$, the mapping $\beta \mapsto f_\beta(x)$ is linear in $\beta$ and $\ellce$ is a convex function in its first argument. Since the $\sup$ operation preserves convexity, it follows that problem~\eqref{eq:surrogate-beta-erm} is {\em convex} in $\beta$. This raises the possibility of solving~\eqref{eq:surrogate-beta-erm} using a gradient-based procedure, which we describe next.

\subsection{Greedy stagewise robust boosting}
\label{sec:greedy-stagewise}
\newcommand\lossfn{\mathcal{L}}

We solve the problem \eqref{eq:surrogate-beta-erm} using the standard boosting paradigm of greedy stagewise fitting \citep{friedman2001greedy,alr}. Assume that the function class $F$ is parametric with a parameter space $\mathcal{W} \subseteq \R^p$, and for every $w \in \mathcal{W}$ we denote by $f_w$ the function in $F$ parameterized by $w$. 
Greedy stagewise fitting builds a solution to the problem \eqref{eq:surrogate-beta-erm} in $T$ stages, for some $T \in \mathbb{N}$. Define $f^{(0)}$ to be the identically $0$ predictor. In each stage $t$, for $t = 1, 2, \ldots, T$, the method constructs the predictor $f^{(t)}$ via the following greedy procedure:
\begin{gather}
  \{\beta_t, w_t\} = \arg\min_{\beta \in \R, w \in \mathcal{W}} \frac{1}{m}\sum_{i=1}^m \ellsce(f^{(t-1)} + \beta f_{w}; x_i, y_i) \label{eq:greedy-step}\\
  f^{(t)} = f^{(t-1)} + \beta_t f_{w_t}. \label{eq:update-step}
\end{gather}
Notice that evaluating $\ellsce(f^{(t-1)} + \beta f_{w}; x_i, y_i)$ involves an inner optimization problem, due to the $\sup_{z_i \in B_p(\delta)}$ for each $i \in [m]$. For each step of gradient descent on $\beta, w$, we solve this inner optimization problem via gradient ascent on the $z_i$'s with projections on $B_p(\delta)$ at each step (this is basically the PGD method of \citet{madry2017towards}). 


Since the inner optimization problem 
involves computing $f^{(t-1)}(x_i + z_i)$ for potentially many values of $z_i$, it becomes necessary to store the parameters of the previously learned functions in memory at each stage. 
In order to derive a practical implementation, we consider an approximate version 
where we replace each $f^{(t-1)}(x_i + z_i)$ evaluation by the value $f^{(t-1)}(x_i)$. Thus the only information we store for the previously learned functions are the values $f^{(t-1)}(x_i)$ for $i \in [m]$, and instead 
we solve: 
\begin{equation}
\label{eq:adv-greedy}
\min_{\beta \in \R, w \in \mathcal{W}} \frac{1}{m}\sum_{i=1}^m \sup_{z_i \in B_p(\delta)} \ellce(f^{(t-1)}(x_i) + \beta f_{w}(x_i+z_i), y_i).
\end{equation}
Empirically, this leads to a significant savings in memory and run time, and as we will show later, our approximate procedure manages to achieve state of the art results. Finally, our boosting framework can be naturally extended to settings where one only has an approximate way of computing the robust error of a hypothesis. Furthermore, we also show that we can also apply our framework to combine weak learners that output a certified radius guarantee along with its prediction, to construct boosted predictors with the same radius guarantee. See Appendix~\ref{sec:extensions} for details.

\section{Experiments}
\label{sec:experiments}
We experiment with three standard image datasets, namely, MNIST \citep{lecun1998gradient}, CIFAR-10 \citep{krizhevsky2009learning} and ImageNet \citep{russakovsky2015imagenet}. We refer the reader to Appendix~\ref{sec:app-experiments} for a discussion of hyperparameter choices.
As described in Section~\ref{sec:greedy-stagewise}, the training algorithm is a greedy stagewise procedure where at each stage we approximately solve \eqref{eq:adv-greedy} by gradient descent on $\beta, w$ and the PGD method of \citet{madry2017towards} for the inner optimization involving the $z_i$'s. 

A crucial component in our practical implementation is the use of {\em cyclic learning rate schedules} in the outer gradient descent (operating on $\beta, w$). When optimizing \eqref{eq:adv-greedy} at stage $t$, we initialize the parameters of the new predictor with the parameters of the previously learned model, i.e., $f^{(t-1)}$. Furthermore, we start the training from a high learning rate and then decrease it to zero following a specific schedule. The use of cyclic learning rates has recently been demonstrated as an effective way to create ensembles of neural networks \citep{loshchilov2016sgdr, huang2017snapshot, smith2017cyclical, izmailov2018averaging}. 
Starting training from a large learning rate has the effect of forcing the training process in the $t$th stage to explore a different part of the parameter space, thereby leading to more diverse and effective ensembles. Following the approach of \citet{loshchilov2016sgdr} and \citet{huang2017snapshot} we use cyclic learning rates as follows. 
In stage $t$ of boosting, we train $\beta_t, w_t$ using SGD over $N_t = 2^{t-1}N_1$ epochs of the training data, where $N_1 \in \mathbb{N}$ is a hyperparameter. The learning rate used in stochastic gradient descent after processing each minibatch is set to $\tfrac 1 2 \eta_{\text{max}}(1+ \cos(\alpha_{\text{cur}}\pi))$, where $\eta_{\text{max}}$ is a tunable hyperparameter similar to the learning rate in standard epoch-wise training schedules, 
and $\alpha_{\text{cur}} \in [0, 1]$ is the fraction of the $N_t$ epochs in the current stage $t$ that have been completed so far.
Thus, over the course of $N_t$ epochs the learning rate goes from $\eta_{\text{max}}$ to $0$ via a cosine schedule. 
 In all our experiments we set $\eta_{\text{min}}$ to be zero and set $\eta_{\text{max}}$ to be a tuneable hyperparameter, similar to the learning rate in standard epoch-wise training schedules. 
 
This leads to our {\em greedy stagewise adversarial boosting} algorithm (see Appendix~\ref{sec:app-experiments}). Each boosting stage, consists of a gradient descent loop to minimize the robust loss, with each iteration of gradient descent implemented by a PGD loop to find a perturbation with high loss.
\begin{figure*}[h]
  \hspace*{\fill}%
  \includegraphics[width=6cm]{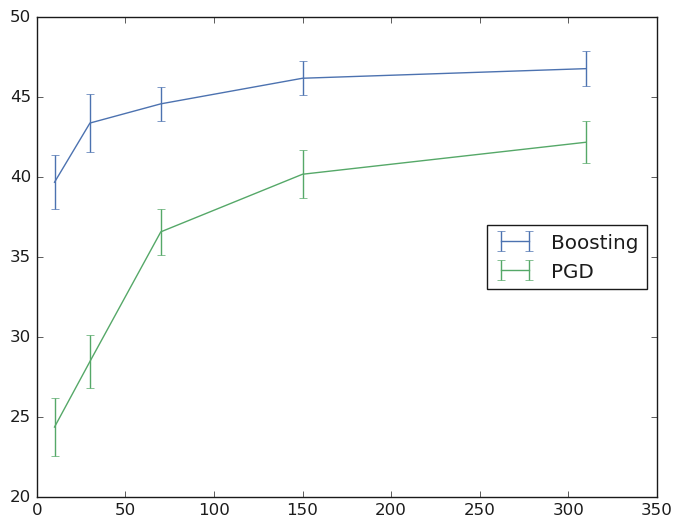}%
  \includegraphics[width=6cm]{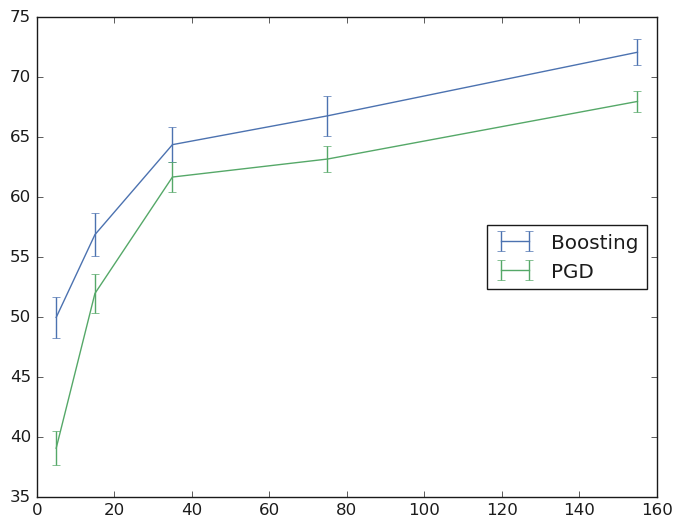}%
  \hspace*{\fill}%
  \caption{\label{fig:scatter}{\em Left:} Comparison of accuracy for a fixed training budget on CIFAR-10 for $\ell_\infty$ robustness. {\em Right:} Comparison of accuracy for a fixed training budget on CIFAR-10 for $\ell_2$ robustness. The x-axis represents the number of epochs and the y-axis represents robust test accuracy.}
\end{figure*}

%

\begin{table*}[t]
\centering
\begin{tabular}{ |c|c|c|c|c|c|c|c|c|c| }
\hline
$\epsilon$ & $0.25$ & $0.5$ & $0.75$ & $1$ & $1.25$ & $1.5$ & $1.75$ & $2$ & $2.25$\\ 
\hline
Bubeck et al. & $73$ & $58$ & $48$ & $38$ & $33$ & $29$ & $24$ & $18$ & $16$\\
\hline
ResNet-20 & 72.6 & 55.34 & 46.44 & 33 & 31 & 27.6 & 22.1 & 16.1 & 12.75\\
\hline
ResNet-32 & {\bf 73.71} & 57.4 & {\bf 50.1} & {\bf 39.07} & 32.7 & {\bf 29.3} & {\bf 25.2} & 17.8 & {\bf 16.8}\\
\hline
\end{tabular}
\begin{tabular}{ |c|c|c|c|c|c|c|c| }
\hline
$\epsilon$ & $0.5$ & $1$ & $1.5$ & $2$ & $2.5$ & $3$ & $3.5$\\ 
\hline
Bubeck et al. & $56$ & $45$ & $38$ & $28$ & $26$ & $20$ & $17$\\
\hline
ResNet-20 & 53.5 & 44 & 32.7 & 24.2 & 24.7 & 17.5 & 16.1\\
\hline
ResNet-32 & {\bf 57.1} & 45.4 & {\bf 39.2} & 27.8 & 26.1 & {\bf 22} & {\bf 18.6}\\
\hline
\end{tabular}
\caption{\label{tbl:l2}
Comparison of certified $\ell_2$ accuracies on CIFAR-10 (top) and ImageNet (bottom).}
\end{table*}

\paragraph{Results on $\ell_\infty$ Robustness.}
In this section we compare robustness to $\ell_{\infty}$ perturbations for classifiers trained via the PGD based algorithm of \citet{madry2017towards} and via our adversarial boosting algorithm. 
We solve the inner maximization problem in \eqref{eq:adv-greedy} via $7$ steps of the projected gradient descent~(PGD) updates. The PGD is started from a random perturbation of the input point, and uses step-size $1.3 \frac{\delta}{7}$, thereby ensuring that PGD updates can reach the boundary of the $\ell_\infty$ ball. Each model is tested by running 20 steps of PGD (also started from a random perturbation of the input point, with step-size $1.3 \frac{\delta}{20}$) with 10 random restarts. For the MNIST dataset, we train a ResNet-110 architecture \citep{he2016deep} to train a single model via adversarial training and compare it to a boosted ensemble of $5$ models using ResNet-12 or ResNet-18 as base predictors. When training the boosted ensemble we set the maximum learning rate $\eta_{\text{max}} = 0.01$ set $N_1 = 5$. For the CIFAR-10 dataset we train a single model on ResNet-110 architecture and compare it to an ensemble of $5$ base predictors using the ResNet-20 and ResNet-32 architectures. Here we set $N_1 = 10$ and $\eta_{\text{max}} = 0.01$. Finally, for the ImageNet dataset we train a single ResNet-50 architecture. Our base predictors in this setting are either the ResNet-20 or the ResNet-32 architectures. We use the same setting of $N_1$ and $\eta_{\text{max}}$ as in CIFAR-10. Our results are shown in Table~\ref{tbl:linf}
As can be seen, in each case our ensemble based training achieves better robust test accuracy as compared to PGD based training of a single model. Furthermore, as Figure~\ref{fig:train-time} shows that our algorithm is able to achieve these accuracy gains at a significantly lower training cost, as a result of working with smaller base predictors. 

\begin{table}[h]\small
\centering
\begin{tabular}{ |c|c|c|c||c|c|c|c|c|c|c|c|c| }
\hline
& \multicolumn{3}{|c||}{MNIST} & & \multicolumn{4}{|c|}{CIFAR-10} & \multicolumn{4}{|c|}{ImageNet}\\
\hline
$\epsilon$ & $0.03$ & $0.07$ & $0.3$ &
$\epsilon$ & $2$ & $4$ & $8$ & $16$& $2$ & $4$ & $8$ & $16$\\ 
\hline
PGD & $99.4$ & $96.7$ & $91.4$ &  & $69$ & $63.4$ & $46$ & $28.9$& $37$ & $35.4$ & $11$ & $4$\\
\hline
ResNet12 & 99.6 & 96.8 & 93.4 & ResNet20 & 69.2 & 62.7 & 45.6 & 27.4& 33.8 & 31.2 & {\bf 12.2} & {\bf 6}\\
\hline
ResNet18 & {\bf 99.6} & {\bf 97} & {\bf 93.8} & ResNet32 & {\bf 70.1} & {\bf 64.6} & {\bf 47.4} & {\bf 29.7}& {\bf 39.7} & {\bf 38.1} & {\bf 14} & {\bf 7.1}\\
\hline
\end{tabular}
\caption{\label{tbl:linf}
Comparison of robust $\ell_\infty$ accuracies on MNIST, CIFAR-10, and ImageNet.}
\vskip -2em
\end{table}

%

\paragraph{Results on $\ell_2$ Robustness.}

To demonstrate the generality of our approach we also apply our boosting based framework to design classifiers that are certifiably robust to $\ell_2$ perturbations. Following recent work of \citet{cohen2019certified} and \citet{salman2019provably}, for a score-based predictor $f$, let $\tilde{f}$ be the smoothed version of $f$ defined as $\tilde{f}(x) = \E_{\epsilon \sim \mathcal{N}(0, \sigma^2 I)} [f(x +\epsilon)]$, where $\sigma$ is a hyperparameter. We apply the greedy stagewise algorithm of Section~\ref{sec:greedy-stagewise} to the class $\tilde{F} = \{\tilde{f}:\ f \in F\}$ instead of $F$. The inner optimization in \eqref{eq:adv-greedy} is again done using $4$ steps of PGD started from the input point with step-size $\frac{\delta}{8}$.
After boosting, we obtain a score-based predictor $f$, which we transform into another score-based predictor $g$ as in \citep{salman2019provably}: $
g(x)_y = \Pr_{\epsilon \sim \mathcal{N}(0, \sigma^2 I)} \big(\arg\max_{y'} f(x+\epsilon)_{y'} = \{y\} \big). 
$
It was shown in the work of \citet{cohen2019certified} that the resulting unilabel predictor $g^\text{am}$ has a radius guarantee: at any point $x$, the prediction of $g^\text{am}$ does not change up to an $\ell_2$ radius of $\frac{\sigma}{2} \big(\Phi^{-1}(g(x)_y) - \Phi^{-1}(g(x)_{y'}) \big)$,
where $y$ and $y'$ are the classes corresponding the maximum and second maximum entries of $g(x)$ respectively, and $\Phi$ is the Gaussian cdf function. In practice, $\Phi^{-1}(g(x)_y)$ and $\Phi^{-1}(g(x)_{y'})$ can be estimated to high accuracy via Monte Carlo sampling. 

We run our adversarial boosting procedure on CIFAR-10 and ImageNet datasets and at the end we compare the certified accuracy of our classifier as compared to the single model trained via the approach of \citet{salman2019provably}. 
We use the same setting of the perturbation radius and the corresponding setting of $\sigma$ as in \citep{salman2019provably}. In each case, we approximate the smoothed classifier by sampling $2$ noise vectors. 
For both CIFAR-10 and ImageNet we train an ensemble of $5$ base predictors using either the ResNet-20 or the ResNet-32 architectures. In contrast, the work of \citet{salman2019provably} uses Reset-110 for training on the CIFAR-10 dataset and a ResNet-50 architecture for training on the ImageNet dataset. In both cases, we set the hyperparameters $\eta_{\text{max}} = 0.3$ and $N_1 = 10$. Our results on certified accuracy as shown in Table~\ref{tbl:l2}.
In each case we either outperform or match the state of the art guarantee achieved by the work of \citet{salman2019provably}. 

Finally, Figure~\ref{fig:scatter} shows test accuracy achieved by our method as compared to the PGD approach (for $\ell_\infty$ robustness) and adversarial smoothing (for $\ell_2$ robustness) at various intervals (wall clock time) during the training process. In each case our method achieves significantly higher test accuracies. We acknowledge that recent and concurrent works have investigated fast algorithms for adversarial training~\citep{wong2020fast, shafahi2019adversarial}. Our boosting based framework complements these approaches and in principle these methods can be used as base predictors in our general framework. We consider this as a direction for future work.

\section{Conclusion}
We demonstrated that provably robust adversarial booting is indeed possible and leads to an efficient practical implementation. The choice of the architectures used in
the base predictors play a crucial role and it would be interesting to develop principled approaches to search for the ``right'' base predictors. When training a large number of weak predictors, inference time will lead to additional memory and computational overheads. These are amplified in the context of adversarial robustness where multiple steps of PGD need to be performed per example, to verify robust accuracy. It would be interesting to explore approaches to reduce this overhead. A promising direction is to explore the use of distillation \citep{hinton2015distilling} to replace the final trained ensemble with a smaller one. 

\bibliography{paper}
\bibliographystyle{abbrvnat}

\newpage
\newpage

\appendix


\section{Proofs}
\label{app:proofs}
We first restate and prove Lemma~\ref{lem:loss-transfer} and Lemma~\ref{lem:boosting}.\\
{\bf  Lemma~\ref{lem:loss-transfer}.} {\em
    Let $(x, y) \in \R^d \times [k]$, and $Q \in \Delta(H)$. Then
    \[
    \forall y' \ne y: \ell(Q; x,y,y') < 0 \implies  h_Q^\text{am}(x) = y.
    \]
}
\begin{proof}
    Observe
    \begin{align*}
        &  \forall y' \ne y: \E_{h \sim Q}\left[\indic{y' \in h(x)} - \indic{y \in h(x)}\right] < 0\\
        \implies & \forall y' \ne y:  \mathop{\text{Pr}}_{h \sim Q} \Big( y \in h(x) \Big ) > \mathop{\text{Pr}}_{h \sim Q} \Big( y' \in h(x) \Big ) \\
        \implies &  \{y\} = \mathop{\arg\max}_{y'' \in [k]}\E_{h \sim Q}[h_{y''}(x)] =: h_Q^\text{am}(x) \\
        \implies &  h_Q^\text{am}(x) = y.
    \end{align*}
    Note: the strict inequality implies unique $\arg\max$.
\end{proof}
{\bf Lemma~\ref{lem:boosting}.} {\em 
Suppose that for some fixed $\gamma > 0$ we have a $\gamma$-weak learning procedure $\alg$ for the training set $S$. Then there exists a distribution $Q \in \Delta(H)$ such that for all $(x, y) \in S$, we have $h_Q^\text{am}(x) = y$.
}
 \begin{proof}
 This proof is a straightforward adaptation of the result of  \citet{freund1996game}. From the weak learning guarantee we have that
 \begin{align*}
              & \forall D \in \Delta(\Snot),  \exists h \in H:\err(h, D) \leq - \gamma \\
     \implies & \forall D \in \Delta(\Snot): \min_{h \in H} \err(h, D) \leq - \gamma \\
     \implies & \max_{D \in \Delta(\Snot)} \min_{h \in H} \err(h, D) \leq - \gamma. 
 \end{align*}
 By the Minimax Theorem we get that
 \begin{align*}
     &  \min_{Q \in \Delta(H)} \max_{D \in \Delta(\Snot)} \E_{h \sim Q}[\err(h, D)] \leq - \gamma \\
     \implies &  \min_{Q \in \Delta(H)} \max_{(x_i, y'_i) \in \Snot} \ell(Q; x_i, y_i, y'_i) \leq - \gamma \\
     \implies &  \exists Q \in \Delta(H), \forall (x_i, y'_i) \in \Snot: \ell(Q; x_i, y_i, y'_i) \leq \ - \gamma.
 \end{align*}
 Hence from Lemma~\ref{lem:loss-transfer} we get that for all $(x, y) \in S$, $h_Q^\text{am}(x) = y$.
\end{proof}
Next we restate and prove our main theorem (Theorem~\ref{thm:robust-boosting}) regarding boosting for adversarial robustness. \\
{\bf Theorem~\ref{thm:robust-boosting}.} {\em 
Suppose that for some fixed $\gamma > 0$ we have a $(\gamma, \delta)$-robust weak learning procedure $\alg$ for the training set $S$. Then there exists a distribution $Q \in \Delta(H)$ such that for all $(x, y) \in S$, and for all $z \in B_p(\delta)$, we have $h_Q^\text{am}(x + z) = y$.
}
\begin{proof}
The proof is via a reduction to the standard multilclass boosting technique of Lemma~\ref{lem:boosting}.
The reduction uses the following construction: for any $h \in H$, we define $\tilde h$ to be the {\em multilabel} predictor, defined only on $\{x_1, x_2, \ldots, x_m\}$, as follows: for any $x_i$ and $y \in [k]$, 
\[
    \tilde h(x_i)_y = \begin{cases} \indic{\forall z \in B_p(\delta): h(x_i+z) = y_i} & \text{ if } y = y_i \\
                     \indic{\exists z \in B_p(\delta): h(x_i+z) = y} & \text{ if } y \neq y_i.
    \end{cases}
\]

Let $\tilde{H}$ be the class of all $\tilde h$ predictors constructed in this manner. Notice that for any $h \in H$, $(x, y) \in S$ and $y' \neq y$ we have that
$$
\ell_\delta(h; x, y, y') = \ell(\tilde{h}; x, y, y').
$$
Hence the mapping $h \mapsto \tilde h$ converts a $(\gamma,\delta)$-robust weak learning algorithm for $H$ into a $\gamma$-weak learning algorithm for $\tilde{H}$, which implies, via Lemma~\ref{lem:boosting}, that there is a distribution $\tilde{Q} \in \Delta(\tilde{H})$ such that for all $(x, y) \in S$, we have $\tilde{h}^\text{am}_{\tilde{Q}}(x) = y$. This implies that for any $y' \neq y$, we have
\begin{align*}
&\Pr_{h \sim Q}[\forall z \in B_p(\delta): h(x+z) = y] \\
&> \Pr_{h \sim Q}[\exists z \in B_p(\delta): h(x+z) = y'],	
\end{align*}
which implies for any given $z \in B_p(\delta)$, we have
\begin{equation} \label{eq:q-argmax}
\Pr_{h \sim Q}[h(x+z) = y] > \Pr_{h \sim Q}[h(x+z) = y'].	
\end{equation}
Let $Q \in \Delta(H)$ be the distribution obtained from $\tilde{Q}$ by applying the reverse map $\tilde h \mapsto h$. Then \eqref{eq:q-argmax} implies that for any $(x, y) \in S$, $y' \neq y$, and $z \in B_p(\delta)$, we have
\[h_Q(x+z)_y > h_Q(x+z)_{y'}.\] Thus, $h_Q^\text{am}(x+z) = y$, as required.
\end{proof}
Let $\ellce: \R^k \times [k] \to \R$ be the standard softmax cross-entropy loss, defined as $\ellce(v, y) := -\ln\left(\tfrac{\exp(v_y)}{\sum_{y'} \exp(v_{y'})}\right)$. Then, the robust softmax cross-entropy loss is defined as follows: for any $f: \R^d \to \R^k$, and $(x, y) \in \R^d \times [k]$, define
The following lemma shows that this yields a valid surrogate loss for $\ellova$, up to scaling and translation:
\begin{lemma}
\label{lem:surrogate}
Let $f: \R^d \to \R^k$ be any score-based predictor. Then for any $(x, y) \in \R^d \times [k]$, we have 
\[\ellova(f^\text{am}; x, y) \leq \tfrac{2}{\ln(2)} \ellsce(f; x, y) - 1.\]
\end{lemma}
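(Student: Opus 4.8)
The plan is to first rewrite the claimed inequality so that it only involves a single indicator, and then split into two cases. Using the closed form $\ellova(f^\text{am}; x, y) = 2\,\indic{\exists z \in B_p(\delta): f^\text{am}(x+z) \neq y} - 1$ from \eqref{eq:ova-loss}, the inequality $\ellova(f^\text{am}; x, y) \leq \tfrac{2}{\ln(2)} \ellsce(f; x, y) - 1$ is equivalent (the $-1$ terms cancel and we divide by $2$) to
\[
\indic{\exists z \in B_p(\delta): f^\text{am}(x+z) \neq y} \;\leq\; \tfrac{1}{\ln(2)}\,\ellsce(f; x, y).
\]
I would prove this last inequality according to the value of the left-hand side.

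\textbf{Case 1: the indicator is $0$.} Here it suffices to note that $\ellsce(f;x,y) \geq 0$. Indeed, $\ellce(v, y) = -\ln\big(e^{v_y}/\sum_{y'} e^{v_{y'}}\big)$ and the softmax value $e^{v_y}/\sum_{y'} e^{v_{y'}}$ lies in $(0,1]$, so $\ellce(\cdot, y) \geq 0$; taking the supremum over $z \in B_p(\delta)$ (which is nonempty, e.g.\ it contains $0$) preserves nonnegativity, so the right-hand side is $\geq 0 = $ the left-hand side.

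\textbf{Case 2: the indicator is $1$.} Pick $z^\star \in B_p(\delta)$ with $y'' := f^\text{am}(x+z^\star) \neq y$. Since ties in the argmax are broken arbitrarily, the fact that the argmax returns $y''$ gives $f(x+z^\star)_{y''} \geq f(x+z^\star)_y$. Then the true-label softmax probability at $x+z^\star$ satisfies
\[
\frac{e^{f(x+z^\star)_y}}{\sum_{y'} e^{f(x+z^\star)_{y'}}} \;\leq\; \frac{e^{f(x+z^\star)_y}}{e^{f(x+z^\star)_y} + e^{f(x+z^\star)_{y''}}} \;\leq\; \frac{1}{2},
\]
hence $\ellce(f(x+z^\star), y) = -\ln(\cdot) \geq \ln 2$. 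Since $\ellsce(f;x,y) = \sup_{z \in B_p(\delta)} \ellce(f(x+z), y) \geq \ellce(f(x+z^\star), y) \geq \ln 2$, we get $\tfrac{1}{\ln 2}\ellsce(f;x,y) \geq 1$, which matches the left-hand side. Combining the two cases proves the lemma.

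\textbf{Anticipated difficulty.} There is no substantive obstacle; the argument is elementary. The only points requiring a little care are (i) keeping track of the affine rescaling $v \mapsto \tfrac{2}{\ln 2}v - 1$ when reducing to the single-indicator inequality, and (ii) noting that because argmax ties are broken arbitrarily we only obtain the non-strict inequality $f(x+z^\star)_{y''} \geq f(x+z^\star)_y$, which is nonetheless exactly enough for the factor-$\tfrac12$ bound on the softmax probability and hence the $\ln 2$ lower bound on the cross-entropy.
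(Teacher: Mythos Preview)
Your proof is correct and follows essentially the same approach as the paper's: both reduce to showing that $\ellce(f(x+z),y)\geq \ln(2)\,\indic{f^{\text{am}}(x+z)\neq y}$ pointwise in $z$, using nonnegativity of cross-entropy for the indicator-$0$ case and the bound $\ellce(v,y)\geq\ln 2$ when some $v_{y''}\geq v_y$ for the indicator-$1$ case. The paper packages this as a single inequality and then chains through the $\sup$, whereas you make the case split explicit, but the content is identical.
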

\begin{proof}
Recall that 
$$
\ellce(v, y) := -\ln\left(\tfrac{\exp(v_y)}{\sum_{y'} \exp(v_{y'})}\right).
$$
It is easy to see that $\ellce(v,y) \geq 0$. Furthermore, if there exists $y' \neq y$ such that $v_{y'} \geq v_y$, then $\ellce(v,y) \geq \ln 2$. This implies that $\ellce(v, y) \geq \ln(2) \indic{y \not\in \arg\max_{y'}\{v_{y'}\}}$. Hence we have that for any score based predictor $f$ and example $(x,y)$,
\begin{align*}
\ellova(f^\text{am}; x, y) &= 2\indic{\exists z \in B_p(\delta): f^{\text{am}}(x+z) \neq y} - 1\\
&= 2\cdot \sup_{z \in B_p(\delta)}\indic{f^{\text{am}}(x+z) \neq y} - 1 \\
&\leq \tfrac{2}{\ln(2)} \sup_{z \in B_p(\delta)} \ellce(f(x+z); x, y) - 1\\
&= \tfrac{2}{\ln(2)} \ellsce(f; x, y) - 1.
\end{align*}
\end{proof}

Next, we prove Lemma~\ref{lem:apx-weak-learner} regarding boosting using an approximate checker:\\
{\bf Lemma~\ref{lem:apx-weak-learner}.}
{\em Suppose that we have access to a $(c, \delta)$-approximate checker $\mathcal{A}$ for $H$. Then given any $D \in \Delta(S)$, we can use $\mathcal{A}$ to either find an $h \in H$ such that $\err^\text{ova}_{\delta/c}(h, D) \leq -\gamma$, or certify that for all $h \in H$, we have $\errova(h, D) > -\gamma$.}
\begin{proof}
Let $h \in H$ be any unilabel predictor. We use the shorthand ``$\mathcal{A}(h; x, y) = z$'' to denote the event that the output $\mathcal{A}(h; x, y)$ is a point $z \in B_p(\delta)$ such that $h(x+z) \neq y$. For any $(x, y) \in S$, we have the following inequality from the definition of a $(c, \delta)$-approximate checker:
\begin{align*}
	&\indic{\exists z \in B_p(\delta):\ h(x+z) \neq y} \\
	&\geq \indic{\mathcal{A}(h; x, y) = z} \\
	&\geq \indic{\exists z \in B_p(\delta/c):\ h(x+z) \neq y}.
\end{align*}
This implies that
\begin{align*}
	\errova(h, D) &\geq \E_{(x, y) \sim D}[2\indic{\mathcal{A}(h; x, y) = z} - 1]\\
	 &\geq \err^\text{ova}_{\delta/c}(h, D).
\end{align*}
Thus, if for some $h \in H$ we have
\[\E_{(x, y) \sim D}[2\indic{\mathcal{A}(h; x, y) = z} - 1] \geq -\gamma,\] 
then we have $\err^\text{ova}_{\delta/c}(h, D) \leq -\gamma$ as well. Otherwise, if for all $h \in H$ we have 
\[\E_{(x, y) \sim D}[2\indic{\mathcal{A}(h; x, y) = z} - 1] > -\gamma,\] 
then $\errova(h, D) > -\gamma$ as well.
\end{proof}

Finally, we turn to proving Theorem~\ref{thm:certified} regarding boosting with certified accuracy. We need the following lemma which is a stronger form of Lemma~\ref{lem:boosting} applying to one-vs-all weak learners:
\begin{lemma}
\label{lem:ova-boosting}
Suppose that for some fixed $\gamma > 0$ we have a $(\gamma, 0)$-robust one-vs-all weak learning procedure $\alg$ for the training set $S$. Then there exists a distribution $Q \in \Delta(H)$ such that for all $(x, y) \in S$, we have $\Pr_{h \sim Q}[h(x) = y] \geq \frac{1+\gamma}{2}$.
\end{lemma}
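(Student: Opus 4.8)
\textbf{Proof proposal for Lemma~\ref{lem:ova-boosting}.} The plan is to mimic the min-max argument used for Lemma~\ref{lem:boosting}, but working directly with the one-vs-all loss at perturbation radius $\delta = 0$. First I would unpack the hypothesis. Since $B_p(0) = \{0\}$, the one-vs-all loss \eqref{eq:ova-loss} collapses to $\ellova(h; x, y) = 2\indic{h(x) \neq y} - 1$, so for any $D \in \Delta(S)$ we have $\errova(h, D) = 2\Pr_{(x,y)\sim D}[h(x) \neq y] - 1$. Hence the $(\gamma, 0)$-robust one-vs-all weak learning guarantee is exactly the statement that for every $D \in \Delta(S)$ there is an $h \in H$ with $\Pr_{(x,y)\sim D}[h(x) \neq y] \le \tfrac{1-\gamma}{2}$, and in particular $\max_{D \in \Delta(S)} \min_{h \in H} \Pr_{(x,y)\sim D}[h(x)\neq y] \le \tfrac{1-\gamma}{2}$.

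Next I would set up the zero-sum game whose pure strategies are $D \in S$ (point masses) for the maximizing player and $h \in H$ for the minimizing player, with payoff $\indic{h(x)\neq y}$; extending to mixed strategies $D \in \Delta(S)$ and $Q \in \Delta(H)$ the payoff $\E_{h\sim Q,(x,y)\sim D}[\indic{h(x)\neq y}]$ is bilinear, so von Neumann's minimax theorem (as used in \citet{freund1996game} and in the proof of Lemma~\ref{lem:boosting}) applies and gives
\[
\min_{Q \in \Delta(H)} \max_{D \in \Delta(S)} \E_{h\sim Q,(x,y)\sim D}[\indic{h(x)\neq y}] \;=\; \max_{D \in \Delta(S)} \min_{h \in H} \Pr_{(x,y)\sim D}[h(x)\neq y] \;\le\; \tfrac{1-\gamma}{2}.
\]
Then I would observe that for fixed $Q$, the inner maximum over $\Delta(S)$ is attained at a point mass, i.e. $\max_{D \in \Delta(S)} \E_{h\sim Q,(x,y)\sim D}[\indic{h(x)\neq y}] = \max_{(x_i,y_i)\in S} \Pr_{h\sim Q}[h(x_i)\neq y_i]$.

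Combining the last two displays yields a distribution $Q \in \Delta(H)$ with $\Pr_{h\sim Q}[h(x)\neq y] \le \tfrac{1-\gamma}{2}$ for every $(x,y) \in S$, which rearranges to $\Pr_{h\sim Q}[h(x) = y] \ge \tfrac{1+\gamma}{2}$, as claimed. I do not expect a genuine obstacle here: the argument is a direct specialization of the reasoning already used for Lemma~\ref{lem:boosting}. The only point that warrants a sentence of care is the applicability of the minimax theorem when $H$ is possibly infinite — this is handled in the standard way (either restricting to the finitely many hypotheses the booster can actually invoke, or invoking the version of the minimax theorem valid for the compact simplex $\Delta(S)$ against an arbitrary hypothesis set, exactly as in \citet{freund1996game}); the strengthening over Lemma~\ref{lem:boosting} is simply that the one-vs-all guarantee controls the \emph{total} misclassification probability $\Pr_{h\sim Q}[h(x)\neq y]$ rather than only the pairwise quantities $\ell(Q;x,y,y')$.
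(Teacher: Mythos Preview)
Your proposal is correct and follows essentially the same min-max argument as the paper's own proof: both start from the weak-learning guarantee, pass to $\max_D \min_h$, invoke von Neumann's minimax theorem to swap the order, and then evaluate the inner maximum at a point mass to obtain the per-example bound. The only cosmetic difference is that you work with $\Pr[h(x)\neq y]$ directly whereas the paper keeps the affine transform $\err^{\text{ova}}_0 = 2\Pr[h(x)\neq y]-1$ until the last line; your extra remark on the infinite-$H$ case is a nice point of care that the paper leaves implicit.
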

 \begin{proof}
 This proof is a straightforward adaptation of the result of  \citet{freund1996game}. From the weak learning guarantee we have that
 \begin{align*}
              & \forall D \in \Delta(S),  \exists h \in H:\err^\text{ova}_0(h, D) \leq - \gamma \\
     \implies & \forall D \in \Delta(S): \min_{h \in H} \err^\text{ova}_0(h, D) \leq - \gamma \\
     \implies & \max_{D \in \Delta(S)} \min_{h \in H} \err^\text{ova}_0(h, D) \leq - \gamma. 
 \end{align*}
 By the Minimax Theorem we get that
 \begin{align*}
     &  \min_{Q \in \Delta(H)} \max_{D \in \Delta(S)} \E_{h \sim Q}[\err^\text{ova}_0(h, D)] \leq - \gamma \\
     \implies &  \min_{Q \in \Delta(H)} \max_{(x, y) \in S} \E_{h \sim Q}\left[2\indic{h(x) \neq y} - 1\right] \leq - \gamma \\
     \implies &  \exists Q \in \Delta(H), \forall (x, y) \in S: \Pr_{h \sim Q}[h(x) = y] \geq \tfrac{1+\gamma}{2}.
 \end{align*}
 \end{proof}
Using Lemma~\ref{lem:ova-boosting} reduction in the proof of Theorem~\ref{thm:robust-boosting}, we get the following stronger boosting statement for robust one-vs-all weak learners:
\begin{theorem} 
\label{thm:ova-boosting}
Suppose that for some fixed $\gamma > 0$ we have a $(\gamma, \delta)$-robust one-vs-all weak learning procedure $\alg$ for the training set $S$. Then there exists a distribution $Q \in \Delta(H)$ such that for all $(x, y) \in S$, and for all $z \in B_p(\delta)$, we have $\Pr_{h \sim Q}[h(x + z) = y] \geq \frac{1+\gamma}{2}$.
\end{theorem}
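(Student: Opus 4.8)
The plan is to mirror the proof of Theorem~\ref{thm:robust-boosting} almost verbatim, but to invoke the one-vs-all boosting lemma (Lemma~\ref{lem:ova-boosting}) in place of Lemma~\ref{lem:boosting}, so that the sharper ``probability $\geq \tfrac{1+\gamma}{2}$'' conclusion is carried through rather than merely argmax-correctness. The key observation is that the one-vs-all loss $\ellova(h;x,y) = 2\,\indic{\exists z \in B_p(\delta): h(x+z) \neq y} - 1$ is a ``binary'' quantity determined solely by whether $h$ can be fooled on $x$, so collapsing the adversarial loss to an ordinary ($\delta = 0$) one-vs-all loss requires only a single scalar per training point rather than a full relabelling.

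Concretely, first I would set up the reduction. For each $h \in H$ define a unilabel predictor $\hat h$ on $\{x_1, \dots, x_m\}$ by $\hat h(x_i) = y_i$ when $h(x_i + z) = y_i$ for all $z \in B_p(\delta)$, and otherwise $\hat h(x_i) = h(x_i + z_i)$ for some fixed $z_i \in B_p(\delta)$ witnessing $h(x_i + z_i) \neq y_i$. (Alternatively one can use the multilabel $\tilde h$ of Theorem~\ref{thm:robust-boosting} and read off its one-vs-all loss; I prefer the unilabel version since it feeds directly into Lemma~\ref{lem:ova-boosting}.) Let $\hat H = \{\hat h : h \in H\}$. The defining property, immediate from the construction, is $\indic{\hat h(x_i) \neq y_i} = \indic{\exists z \in B_p(\delta): h(x_i+z)\neq y_i}$, hence $\ell^{\text{ova}}_0(\hat h; x_i, y_i) = \ellova(h; x_i, y_i)$ for every $(x_i, y_i) \in S$, and therefore $\err^{\text{ova}}_0(\hat h, D) = \errova(h, D)$ for every $D \in \Delta(S)$. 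Composing $\alg$ with $h \mapsto \hat h$ thus turns a $(\gamma, \delta)$-robust one-vs-all weak learner for $H$ into a $(\gamma, 0)$-robust one-vs-all weak learner for $\hat H$.

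Next I would apply Lemma~\ref{lem:ova-boosting} to $\hat H$, obtaining $\hat Q \in \Delta(\hat H)$ with $\Pr_{\hat h \sim \hat Q}[\hat h(x_i) = y_i] \geq \tfrac{1+\gamma}{2}$ for all $(x_i,y_i) \in S$. Let $Q \in \Delta(H)$ be a distribution whose image under $h \mapsto \hat h$ is $\hat Q$ (as in Theorem~\ref{thm:robust-boosting}; constructively, $\hat Q$ is the finite mixture of hypotheses produced by regret minimization, each lifting a specific $\alg(D_j) \in H$). Since $\hat h(x_i) = y_i$ holds exactly when $h(x_i+z) = y_i$ for all $z \in B_p(\delta)$, this yields $\Pr_{h \sim Q}[\forall z \in B_p(\delta):\, h(x_i+z) = y_i] \geq \tfrac{1+\gamma}{2}$. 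Finally, for any fixed $z \in B_p(\delta)$ the event $\{\forall z' \in B_p(\delta):\, h(x_i+z') = y_i\}$ is contained in the event $\{h(x_i+z) = y_i\}$, so $\Pr_{h \sim Q}[h(x_i+z) = y_i] \geq \tfrac{1+\gamma}{2}$, which is the claim.

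I expect the only delicate points to be bookkeeping rather than mathematics: (i) checking that $\hat h$ is well defined on every training point and that the loss identity holds including the edge case where $h$ is robustly correct on $x_i$ (so the ``fixed incorrect label'' branch is vacuous there), and (ii) confirming that the ``$\forall D \in \Delta(S)$'' quantifier in the weak-learning definition transfers through the reduction, which it does automatically since the two losses agree pointwise on $S$. The concluding monotonicity step is routine. In short, this theorem is a refinement of Theorem~\ref{thm:robust-boosting} obtained by substituting the sharper boosting lemma, and it needs no ideas beyond those two results.
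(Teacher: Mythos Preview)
Your proposal is correct and follows essentially the same approach as the paper, which proves Theorem~\ref{thm:ova-boosting} in one line by combining the reduction from the proof of Theorem~\ref{thm:robust-boosting} with Lemma~\ref{lem:ova-boosting}. Your explicit construction of a unilabel $\hat h$ (rather than reusing the multilabel $\tilde h$) is a sensible adaptation since Lemma~\ref{lem:ova-boosting} and the one-vs-all loss are stated for unilabel predictors, but this is cosmetic; the substance of the argument is identical.
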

We can now prove Theorem~\ref{thm:certified}:\\
{\bf Theorem~\ref{thm:certified}.} {\em 
Let $H$ be a class of predictors with radius guarantees. Let $\delta^*$ be the maximal radius such that for any distribution $D \in \Delta(S)$, there exists an $h \in H$ with $\text{acc}_{\delta^*}(h, D) \geq \frac{1}{2} + \frac{\gamma}{2}$, for some constant $\gamma > 0$. Then, there exists a distribution $Q \in \Delta(H)$ and function $\rho_Q: \R^d \to \R$ such that the predictor $f = (h_Q^\text{am}, \rho_Q)$ has perfect certified accuracy at radius $\delta^*$ on $S$.
}
\begin{proof}
Note that for any distribution $D \in \Delta(S)$ and $h \in H$, if $\text{acc}_{\delta^*}(h, D) \geq \frac{1}{2} + \frac{\gamma}{2}$, then $\err^\text{ova}_{\delta^*}(h, D) \leq -\gamma$. Thus we can apply Theorem~\ref{thm:robust-boosting} to conclude that there exists a distribution $Q \in \Delta(H)$ such that for all $(x, y) \in S$ and for all $z \in B_p(\delta^*)$, we have $\Pr_{h \sim Q}[h(x+z) = y] \geq \frac{1+\gamma}{2}$. Evidently the predictor $h_Q$ has perfect certified accuracy at radius $\delta^*$ on $S$. To define $\rho_Q$, we assume without loss of generality that $Q$ has finite support (indeed, the constructive form of Theorem~\ref{thm:robust-boosting} yields this) $\{h_1, h_2, \ldots, h_N\}$, and let $Q(h_i)$ be the $Q$-mass of $h_i$. 

Now given any input $x \in \R^d$, let $h_Q^\text{am}(x) = y$. Let $h_{(1)}, h_{(2)}, \ldots, h_{(N)}$ be an ordering of the hypotheses in the support of $Q$ arranged in increasing order of their radius guarantees for $x$: i.e. if $r_i$ is the radius guarantee of $h_{(i)}(x)$, then $r_1 \leq r_2 \leq \cdots \leq r_N$. For any index $i \in \{1, 2, \ldots, N\}$, let $h_Q^{(i)} = \sum_{j=i}^N Q(h_{(j)}) h_{(j)}$. By a linear scan in the set $\{r_1, r_2, \ldots, r_N\}$, find the largest index $i^*$ such that the following condition holds:
\[ \sum_{i < i^*} Q(h_{(i)}) + \max_{y' \neq y} h_Q^{(i^*)}(x)_{y'} \leq h_Q^{(i^*)}(x)_y, \]
Note that such an index exists because the index $1$ satisfies this condition. It is easy to see that the prediction of $h_Q^\text{am}$ is unchanged upto radius $r_{i*}$, so we can set $\rho_Q(x) = r_{i^*}$.
\end{proof}

\section{Extensions of Boosting Framework}
\label{sec:extensions}

We now return to setting of Section~\ref{sec:ova-boosting}, working with a unilabel class of predictors $H$. We describe two extensions of our boosting analysis in the following subsections.

\subsection{Boosting with Approximate Checkers}

In order to construct a robust one-vs-all weak learning algorithm (see Definition~\ref{def:robust-ova-weak-learner}) to employ in a boosting algorithm, at the very least we need a way to compute the robust one-vs-all error rate $\errova(h, D)$ of a given predictor $h$. This requires us to find adversarial perturbations for a given input $(x, y)$, a task which is NP-hard in general. To counter this problem, \citet{AwasthiDV19} introduced the notion of an {\em approximate checker} and showed that such a checker can be implemented in polynomial time for polynomial threshold functions and a subclass of 2-layer neural networks. An approximate checker is defined as follows:
\begin{definition}[$(c, \delta)$-approximate checker]
\label{def:apx-checker}
For some constant $c \geq 1$, a $(c, \delta)$-approximate checker for hypothesis class $H$ is an algorithm $\mathcal{A}$ with that has the following specification: for any unilabel predictor $h \in H$, and any example $(x, y) \in \R^d \times [k]$, the output $\mathcal{A}(h; x, y)$ is
\[\begin{cases}
	z\!\! \in\!\! B_p(\delta)\!\!: h(x\!\! +\!\! z) \neq y & \text{if }\ \exists z'\!\! \in\!\! B_p(\tfrac{\delta}{c})\!\!: h(x\!\! +\!\! z') \neq y; \\
	\textnormal{\texttt{Good}} & \text{if }\ \forall z\!\! \in\!\! B_p(\delta)\!\!: h(x\!\! +\!\! z) = y;\\
	\emptyset & \text{otherwise.}
\end{cases}
\]
We use $\emptyset$ to mean that the output can be arbitrary.
\end{definition}
Using an approximate checker, we can construct a robust one-vs-all weak learner as described in the following lemma:
\begin{lemma}
\label{lem:apx-weak-learner}
 Suppose that we have access to a $(c, \delta)$-approximate checker $\mathcal{A}$ for $H$. Then given any $D \in \Delta(S)$, we can use $\mathcal{A}$ to either find an $h \in H$ such that $\err^\text{ova}_{\delta/c}(h, D) \leq -\gamma$, or certify that for all $h \in H$, we have $\errova(h, D) > -\gamma$.
\end{lemma}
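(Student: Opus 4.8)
The idea is to turn the approximate checker $\mathcal{A}$ into a \emph{computable} proxy for $\err^{\text{ova}}_\delta$ and then read off the claimed dichotomy from two one-sided inequalities relating this proxy to the error rates at radii $\delta$ and $\delta/c$. Fix $h \in H$. For an example $(x,y)$, say that ``$\mathcal{A}$ \emph{flags} $(h;x,y)$'' if $\mathcal{A}(h;x,y)$ returns an actual point $z \in B_p(\delta)$ with $h(x+z) \ne y$ (rather than the token \texttt{Good} or an arbitrary value). Define
\[ \widehat{\err}(h, D) \;:=\; \E_{(x,y) \sim D}\!\left[\,2\,\indic{\mathcal{A}\text{ flags }(h;x,y)} - 1\,\right]. \]
Since $D$ is supported on the finite set $S$, $\widehat{\err}(h,D)$ can be evaluated exactly by calling $\mathcal{A}$ once per example, with no additional adversarial search required.

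\textbf{The sandwich.} The heart of the argument is the pointwise chain of indicators
\[ \indic{\exists z \in B_p(\delta):\, h(x+z) \ne y} \;\ge\; \indic{\mathcal{A}\text{ flags }(h;x,y)} \;\ge\; \indic{\exists z' \in B_p(\delta/c):\, h(x+z') \ne y}, \]
which I would verify by going through the three cases in Definition~\ref{def:apx-checker}, using $B_p(\delta/c) \subseteq B_p(\delta)$ (valid since $c \ge 1$). If an adversarial perturbation exists within radius $\delta/c$, then $\mathcal{A}$ must return a valid one within radius $\delta$, so all three indicators equal $1$; if no adversarial perturbation exists within radius $\delta$, then $\mathcal{A}$ returns \texttt{Good} and all three equal $0$; in the remaining ``otherwise'' case there is an adversarial perturbation at radius $\delta$ but none at radius $\delta/c$, so the left indicator is $1$, the right is $0$, and the middle one lies (trivially) in between. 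Multiplying by $2$, subtracting $1$, and taking $\E_{(x,y)\sim D}$ yields
\[ \errova(h, D) \;\ge\; \widehat{\err}(h, D) \;\ge\; \err^{\text{ova}}_{\delta/c}(h, D). \]

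\textbf{Concluding the dichotomy.} With the sandwich in hand, we run whatever learning procedure is available over $H$ using $\widehat{\err}(\cdot, D)$ as the (now computable) objective. If it returns some $h \in H$ with $\widehat{\err}(h, D) \le -\gamma$, the right inequality gives $\err^{\text{ova}}_{\delta/c}(h, D) \le -\gamma$, which is the first alternative. Otherwise every $h \in H$ has $\widehat{\err}(h, D) > -\gamma$, and the left inequality upgrades this to $\errova(h, D) > -\gamma$ for all $h \in H$, which is the second alternative. The $h$ produced in the first case --- carrying the weakened radius $\delta/c$ --- is then exactly the robust one-vs-all weak learner one feeds into Theorem~\ref{thm:ova-boosting} (equivalently Theorem~\ref{thm:robust-boosting}).

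\textbf{Where the difficulty sits.} There is no real analytic obstacle here; the only thing to be careful about is the semantics of the arbitrary (``$\emptyset$'') branch of the checker, since in that branch we have no control over whether $\mathcal{A}$ flags the example. The sandwich survives precisely because that branch is entered only when the two outer indicators are already pinned to $1$ and $0$, leaving the middle indicator free to be anything in $[0,1]$ without breaking either inequality. Making that case distinction cleanly --- and matching it to the quantifier structure ``$\exists$ at radius $\delta/c$'' versus ``$\forall$ at radius $\delta$'' --- is the one place where a small slip would propagate.
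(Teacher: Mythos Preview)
Your proposal is correct and follows essentially the same approach as the paper: define the checker-based proxy error, establish the sandwich $\errova(h,D) \ge \widehat{\err}(h,D) \ge \err^{\text{ova}}_{\delta/c}(h,D)$ via the pointwise indicator chain, and read off the dichotomy from the two one-sided inequalities. Your treatment of the ``otherwise'' branch is in fact more explicit than the paper's, which simply asserts the indicator chain ``from the definition'' without walking through the three cases.
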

Now suppose that $H$ is rich enough so that for any $D \in \Delta(S)$, there exists an $h \in H$ such that $\errova(h, D) \leq -\gamma$. Then Theorem~\ref{thm:robust-boosting} implies that the weak learner described in Lemma~\ref{lem:apx-weak-learner} can be used to find a predictor with perfect robust accuracy, up to a radius of $\delta/c$, on the training set $S$.

\subsection{Certified accuracy}

\citet{cohen2019certified}, following the work of \citet{lecuyer2019certified}, constructed predictors which have an appealing radius guarantee property, obtained via randomized smoothing of the outputs. Specifically, for each input $x$, the predictor outputs not only a class $y$ for $x$, but also a radius $\delta$ such that the predicted class is guaranteed to be $y$ for any perturbation of $x$ within a ball of $\ell_2$ radius $\delta$ around $x$. Thus, the predictor $h$ is a pair of functions $(h_L, h_R)$ with $h_L: \R^d \to [k]$ and $h_R: \R^d \to \R$, so that on input $x$, the predicted label is $h_L(x)$ and is guaranteed to not change within an $\ell_2$ ball of radius $h_R(x)$ around $x$.
\begin{definition}[Certified accuracy]
\label{def:certified-accuracy}
Given a distribution $D$ over examples and a radius $\delta \geq 0$, the {\em certified accuracy} of a predictor $h: \R^d \times [k] \times \R$ at radius $\delta$ is defined as
\[ \text{acc}_\delta(h, D) := \Pr_{(x, y) \sim D}[h_L(x) = y \text{ and } h_R(x) \geq \delta].\]
\end{definition}
In our boosting framework, if the weak learner outputs predictors with radius guarantees as above, we would like the boosted predictor to also have radius guarantees. The following theorem shows that it is indeed possible to construct such a boosted predictor whenever the class of predictors satisfies a certain weak-learning type assumption based on certified accuracy:
\begin{theorem}
\label{thm:certified}
Let $H$ be a class of predictors with radius guarantees. Let $\delta^*$ be the maximal radius such that for any distribution $D \in \Delta(S)$, there exists an $h \in H$ with $\text{acc}_{\delta^*}(h, D) \geq \frac{1}{2} + \frac{\gamma}{2}$, for some constant $\gamma > 0$. Then, there exists a distribution $Q \in \Delta(H)$ and function $\rho_Q: \R^d \to \R$ such that the predictor $f = (h_Q^\text{am}, \rho_Q)$ has perfect certified accuracy at radius $\delta^*$ on $S$.
\end{theorem}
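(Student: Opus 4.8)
The plan is to reduce, via a min-max argument of the same shape as Lemma~\ref{lem:ova-boosting}, to an existence statement about a distribution $Q$ over base predictors that are \emph{themselves certified} at radius $\delta^*$, and then to construct the radius function $\rho_Q$ by hand. First I would introduce the $\{-1,1\}$-valued ``certified-correctness'' loss $\ell^{\text{cert}}(h;x,y) := 1 - 2\,\indic{h_L(x) = y \text{ and } h_R(x) \ge \delta^*}$. Since an $h$ that is certified at $x$ to radius $\delta^*$ cannot be fooled by any $\ell_2$-perturbation in $B_p(\delta^*)$, this loss dominates $\ellova(h_L;x,y)$; more importantly, the defining property of $\delta^*$ says exactly that for every $D \in \Delta(S)$ there is an $h \in H$ with $\E_{(x,y)\sim D}[\ell^{\text{cert}}(h;x,y)] = 1 - 2\,\text{acc}_{\delta^*}(h,D) \le -\gamma$. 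Running the Minimax-Theorem argument of Lemma~\ref{lem:ova-boosting} verbatim with $\ell^{\text{cert}}$ in place of the one-vs-all loss then produces a distribution $Q \in \Delta(H)$, which may be taken to have finite support, such that $\Pr_{h\sim Q}[h_L(x)=y \text{ and } h_R(x) \ge \delta^*] \ge \tfrac{1+\gamma}{2}$ for every $(x,y)\in S$; in particular $\Pr_{h\sim Q}[h_L(x)=y] > \tfrac12$, so $h_Q^\text{am}(x)=y$ on all of $S$.

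Next I would define $\rho_Q$ pointwise, following the recipe suggested by the statement. Let $\{h_1,\dots,h_N\}$ be the support of $Q$. Given $x$, put $y := h_Q^\text{am}(x)$, sort the support as $h_{(1)},\dots,h_{(N)}$ so that their radius guarantees at $x$ satisfy $r_1 \le \cdots \le r_N$, view each $h_{(j)}$ as its one-hot score vector, and set $h_Q^{(i)} := \sum_{j \ge i} Q(h_{(j)})\, h_{(j)}$. Let $i^*$ be the largest index such that $\sum_{i<i^*} Q(h_{(i)}) + \max_{y'\ne y} h_Q^{(i^*)}(x)_{y'} \le h_Q^{(i^*)}(x)_y$ (index $1$ always works since $y$ maximizes $h_Q(x)$), and define $\rho_Q(x) := r_{i^*}$.

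The verification then has two parts. For validity of the radius guarantee: if $\|z\|_2 \le r_{i^*}$ then every $h_{(j)}$ with $j \ge i^*$ keeps its label at $x$, so $h_Q(x+z)_y \ge h_Q^{(i^*)}(x)_y$ while $h_Q(x+z)_{y'} \le \sum_{i<i^*}Q(h_{(i)}) + h_Q^{(i^*)}(x)_{y'}$ for $y' \ne y$, and the defining inequality forces $h_Q^\text{am}(x+z) = y$. For perfect certification on $S$: fix $(x,y)\in S$ and let $i_0$ be the first index with $r_{i_0} \ge \delta^*$ (it exists because the previous step guarantees positive $Q$-mass on hypotheses with $h_R(x) \ge \delta^*$). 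Every $h$ in the support with $h_L(x)=y$ and $h_R(x)\ge\delta^*$ has index $\ge i_0$, so $h_Q^{(i_0)}(x)_y \ge \tfrac{1+\gamma}{2}$; as the total mass of $h_Q^{(i_0)}(x)$ equals $1 - \sum_{i<i_0}Q(h_{(i)})$, this gives $\sum_{i<i_0}Q(h_{(i)}) + \max_{y'\ne y}h_Q^{(i_0)}(x)_{y'} \le 1 - h_Q^{(i_0)}(x)_y \le \tfrac{1-\gamma}{2} \le h_Q^{(i_0)}(x)_y$, so $i_0$ meets the defining condition, hence $i^* \ge i_0$ and $\rho_Q(x) = r_{i^*} \ge \delta^*$. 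Combined with $h_Q^\text{am}(x)=y$, this is exactly perfect certified accuracy at radius $\delta^*$ on $S$.

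The main obstacle is conceptual: $\rho_Q$ is built entirely from the \emph{claimed} radii of the base predictors, so the plain robust one-vs-all boosting guarantee from Theorem~\ref{thm:ova-boosting} (which only yields $\Pr_{h\sim Q}[h_L(x+z)=y] \ge \tfrac{1+\gamma}{2}$ for $z \in B_p(\delta^*)$) does not suffice on its own — one must carry the certification itself through the minimax, so that a $\tfrac{1+\gamma}{2}$-fraction of the $Q$-mass is provably on hypotheses that are \emph{individually} certified at radius $\delta^*$ at the point in question. Apart from this, the remaining work is routine: a little care with tie-breaking in the argmax, and checking that $i^*$ (and, at training points, $i_0$) is well defined.
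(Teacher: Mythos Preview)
Your proof is correct, and the construction of $\rho_Q$ (sort the support by radius, take the largest index $i^*$ satisfying the mass inequality, set $\rho_Q(x)=r_{i^*}$) is exactly the paper's. The one genuine difference is how you obtain $Q$. The paper passes from the certified-accuracy hypothesis to $\err^\text{ova}_{\delta^*}(h,D)\le-\gamma$ and then invokes Theorem~\ref{thm:ova-boosting}, whose conclusion is only that $\Pr_{h\sim Q}[h_L(x+z)=y]\ge\tfrac{1+\gamma}{2}$ for all $z\in B_p(\delta^*)$. You instead run the minimax argument directly on the certified-correctness loss $\ell^{\text{cert}}$, which yields the stronger statement $\Pr_{h\sim Q}[h_L(x)=y \text{ and } h_R(x)\ge\delta^*]\ge\tfrac{1+\gamma}{2}$ at each training point.

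This is not merely cosmetic: the stronger conclusion is what you actually need to show $\rho_Q(x)\ge\delta^*$ on $S$, since $\rho_Q$ is built from the \emph{declared} radii $h_R$ of the base predictors, not from their robust behavior. The paper's proof asserts perfect certified accuracy but does not spell out the step $\rho_Q(x)\ge\delta^*$, and the conclusion of Theorem~\ref{thm:ova-boosting} alone does not give it (it says nothing about how much $Q$-mass sits on hypotheses with $h_R(x)\ge\delta^*$). Your $i_0$ argument supplies precisely this missing piece. So your route is the same construction with a tighter minimax step that makes the final verification go through cleanly; the residual tie-breaking issue you flag is shared by the paper and is indeed routine.
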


\section{Experiments and Algorithm Pseudo Code}
\label{sec:app-experiments} 
Below we present the pseudo code for our boosting base training procedure. Recall the cyclic learning rate schedule where the learning rate used in stochastic gradient descent after processing each minibatch is set to
\begin{align}
\label{eq:cyclic-lr}
\tfrac 1 2 \eta_{\text{max}}(1+ \cos(\alpha_{\text{cur}}\pi)),
\end{align}

\begin{algorithm}[H]
\caption{Greedy Stagewise Adversarial Boosting with Offset Approximation}
\begin{algorithmic}[1]
\item[] \textbf{Input:} $S = \{(x_1, y_1), \dots, (x_m, y_m)\}$, $N_1$, $\eta_{\text{max}}, T, \epsilon$.
\item [] \textbf{Output:} Boosted classifier: $f = \sum_{t=1}^T \beta_t f_{w_t}$.
\FOR{$t=1$ to $T$}
\STATE Initialize the parameters $\beta_t, w_t$ to be zero.
\STATE $N_t = 2^{t-1} N_1$.
\FOR{epochs in $1$ to $N_t$}
\FOR{$j$ in $1$ to $\text{num\_batches}$}
\STATE Get next mini batch and update learning rate $\eta$ using \eqref{eq:cyclic-lr}.
\STATE Update $\{w_t, \beta_t \}$ via optimizing \eqref{eq:adv-greedy} on the current mini batch.
\ENDFOR
\ENDFOR
\STATE Update $f^t \leftarrow f^{t-1} + \beta_t f_{w_t}$. 
\ENDFOR
\STATE Output $f = \sum_{t=1}^T \beta_t f_{w_t}$.
\end{algorithmic}
\label{alg:boost}
\end{algorithm}

\subsection{Discussion of Training Settings}
All the hyperparameters in our experiments were chosen via cross validation and we discuss the specific choice of learning rates in the relevant sections. For the adversarial smoothing based method of~\citet{salman2019provably} we used the same settings of hyperparameters as described by the authors in~\citet{salman2019provably}. The results on MNIST and CIFAR-10 are averaged over $10$ runs, while the results on ImageNet are averaged over $3$ runs. Figure~\ref{fig:scatter} shows the comparison of robust accuracy achieved by our ensemble and that achieved by training a single model trained via PGD \citep{madry2017towards} or smooth adversarial training \citep{salman2019provably}, for a fixed training budget as measured in wall clock time. As can be seen our ensemble always achieves higher robust accuracies for the same training budget. 

Although we can train our ensembles much faster, the size of the final ensembles that we produce are typically larger than the model used for training via existing approaches \citep{madry2017towards, salman2019provably}. In the case of CIFAR-10 our ensemble of 5 ResNet-32 models is a bit larger than the ResNet-110 architecture that we compare to. In the case of ImageNet our ensemble of 5 ResNet-20 or ResNet-32 architectures are significantly larger than the ResNet-50 architecture that we compare to. In general, the choice of the base architecture plays a crucial role in the success of the boosting based approach. Our experiments reveal that the base architecture has to be of a certain complexity for the ensemble to be able to compete with the training of a single model. Furthermore, at inference time the entire ensemble has to be loaded into memory. For our purposes this causes negligible overhead as our ensembles do not contain too many base predictors. In general however, this memory overhead at inference should be taken into consideration. Furthermore, at  inference time, our ensembles allow for a natural parallelism that can be exploited via using multiple GPUs. We did not explore this in the current work.

\end{document}
